\newcommand{\dR}{\mathbb{R}}
\newcommand{\independent}{\perp\mkern-10mu\perp}
\begin{document}

\twocolumn[
\aistatstitle{On sample complexity of conditional independence testing with Von Mises estimator with application to causal discovery}

\aistatsauthor{%
  \hspace{-.4cm}Fateme Jamshidi \\
  \hspace{-.4cm}EPFL, Switzerland \\
  \hspace{-.4cm}\texttt{fateme.jamshidi@epfl.ch}\\
  \And
  \hspace{-.3cm}Luca Ganassali \\
  \hspace{-.3cm}EPFL, Switzerland \\
  \hspace{-.3cm}\texttt{luca.ganassali@epfl.ch} \\
  \And
  \hspace{-.4cm}Negar Kiyavash \\
  \hspace{-.4cm}EPFL, Switzerland \\
  \hspace{-.4cm}\texttt{negar.kiyavash@epfl.ch} \\
}

\aistatsaddress{} ]

\begin{abstract}
    Motivated by conditional independence testing, an essential step in constraint-based causal discovery algorithms, we study the nonparametric Von Mises estimator for the entropy of multivariate distributions built on a kernel density estimator. 
    We establish an exponential concentration inequality for this estimator. We design a test for conditional independence (CI) based on our estimator, called VM-CI, which achieves optimal parametric rates under smoothness assumptions. Leveraging the exponential concentration, we prove a tight upper bound for the overall error of VM-CI.
    This, in turn, allows us to characterize the sample complexity of any constraint-based causal discovery algorithm that uses VM-CI for CI tests. To the best of our knowledge, this is the first sample complexity guarantee for causal discovery for continuous variables.
    Furthermore, we empirically show that VM-CI outperforms other popular CI tests in terms of either time or sample complexity (or both), which translates to a better performance in structure learning as well.
\end{abstract}
\vspace{-0.5cm}
\section{Introduction}

Causal discovery, the pursuit of uncovering the cause-and-effect relationships governing complex systems, has been the focus of intense research in machine learning, statistics, and various scientific domains over the past few decades. 
This is due to the extensive impact of causal inference, which enables us to make well-informed decisions and policies.

Current approaches for learning causal mechanisms in data can be divided into two categories: \emph{score-based}, e.g., \cite{chickering2002optimal, solus2021consistency, zheng2018dags, zhu2019causal}, and \emph{constraint-based}, e.g., parent-child (PC) algorithm \citep{spirtes2000causation} and grow-shrink (GS) algorithm \citep{margaritis1999bayesian}. 
Score-based approaches place restrictions on the functional causal model and/or the distribution of data. 
As a consequence, these methods can struggle to identify an accurate causal graph when dealing with complex relationships between variables or in the presence of hidden variables. Constraint-based methods often do not rely on the aforementioned assumptions and directly test for conditional independence (CI) relations between pairs of variables to determine causal connections. 

Theoretical performance guarantees of constraint-based discovery algorithms in the literature nearly always hinge on the availability of a perfect CI oracle, which determines whether two random variables are conditionally independent. 
In practice, this oracle is substituted with a statistical conditional independence test, which assesses independence using a limited number of observed data points. 
Hence, to ensure the reliability and applicability of constraint-based causal discovery methods, it is imperative to establish robust sample complexity guarantees. Sample complexity of a causal discovery algorithm is the minimum number of data samples needed to infer the causal graph accurately at a given confidence level. 

Unlike unconditional independence testing, conditional independence is not a testable hypothesis without further assumptions on the distribution. \cite{shah2020} proved this fundamental hardness result by showing that if $(X,Y,Z)$ has an absolutely continuous distribution with respect to the Lebesgue measure and a given CI test has a level less than $\alpha$, there is no alternative under which the test has a power greater than $\alpha$. \cite{neykov21} show that minimax optimal bounds can be obtained when defining an alternative by discarding distributions that are ``$\eps-$close'' to the null hypothesis. 

It is noteworthy that CI tests are well understood for discrete variables; see \cite{canonne18}. Another solved case is that of linear models with Gaussian noise, where conditional independence is equivalent to zero partial correlation, which is easy to test.

In this paper, we derive sample complexity guarantees for CI tests for continuous distributions. Specifically, we design conditional independence tests built upon estimating conditional mutual information, a measure of conditional dependence between variables. The  mutual information $I(X;Y \cond Z)$ between two random variables $X,Y$ conditioned on $Z$ is given by:
\vspace{-0.25cm}
\begin{multline*}
    \Scale[0.99]{\iiint \log\left( \frac{p_Z(z) p_{X,Y,Z}(x,y,z)}{p_{X,Z}(x,z) p_{Y,Z}(y,z)}\right) p_{X,Y,Z}(x,y,z) \,  \mathrm{d}x \, \mathrm{d}y \,  \mathrm{d}z \, .}
\end{multline*} Furthermore,  \begin{flalign}\label{eq:CMI_entropy}
    & I(X;Y \cond Z) = I(X;Y,Z) - I(X;Z) \nonumber \\
    & = H(X,Z) + H(Y,Z) - H(Z) - H(X,Y,Z) \, . 
\end{flalign}
Hence, estimating $I(X;Y \cond Z)$ reduces to estimating entropy,  the approach we shall take. 
Specifically, we use the \emph{Von Mises estimator} $\hat{H}_{\mathrm{vm}}$, defined in Section \ref{sec:von_mises}, which has theoretical and practical advantages. First, this estimator is easy to compute by the usual trick of replacing the integration step with a Monte Carlo type summation. Second, when combined with a (nonparametric) kernel estimate of the density, under smoothness\footnote{The appropriate notion of smoothness, $\beta-$Hölder smoothness, will be introduced in Definition \ref{def:holder}.} assumptions on the joint, $\hat{H}_{\mathrm{vm}}$ converges at the parametric rate $O(n^{-1/2})$, and hence escapes the curse of dimensionality. Finally, it is computationally efficient: its time complexity is linear in the dimension and quadratic in the number of samples. 

\vspace{-0.25cm}
\paragraph{Contributions}
Our main contributions are as follows:
\vspace{-0.25cm}
\begin{itemize}
    \item[$(i)$] We establish an exponential concentration inequality for $\hat{H}_{\mathrm{vm}}$ when the joint density is learned via kernel density estimation (KDE). This 
    allows for deriving a tighter sample complexity bound for $\hat{H}_{\mathrm{vm}}$ compared to those obtained by a standard appeal to Markov's inequality. 
    
    \item[$(ii)$] We define a test for conditional independence, VM-CI, based on Von Mises estimators and establish its sample complexity when discriminating the null hypothesis $H_0$ of conditional independence from an alternative of the form $H_1 := I(X;Y \cond \bZ) > I_{\min}$, with a given level of confidence $1-\alpha$. These results are robust for \emph{all} sufficiently smooth, compactly supported distributions with positive lower bounds.

    \item[$(iii)$] We show that the established sample complexity guarantees of VM-CI yield sample complexity bounds for any constraint-based causal discovery algorithm under mild smoothness assumptions. As an example, we present these bounds for two popular methods, PC and GS. To the best of our knowledge, these are the first sample complexity guarantees for causal discovery algorithms in the continuous case. 
\end{itemize}
\vspace{-0.25cm}
\paragraph{Outline of the paper}
The kernel density estimator, as well as plug-in and Von Mises entropy estimators, are defined in Section \ref{sec:background}. In Section \ref{sec:theory}, we establish the exponential concentration properties of $\hat{H}_{\mathrm{vm}}$, define a CI test based on the former, and derive its error rates. Section \ref{sec:causal} is dedicated to causal discovery, where we derive sample complexity guarantees for PC and GS algorithms. Numerical experiments are presented in Section \ref{sec:exp}. Proofs of our theorems and corollaries are deferred to Appendix \ref{app:proofs}, and further details on numerical experiments can be found in Appendix \ref{app:experiments}.
\vspace{-0.3cm}
\section*{Related work}
\vspace{-0.3cm}
\paragraph{Conditional independence testing for continuous variables}
In the past decade, several methods for CI testing for continuous variables have been developed. One approach (see, e.g., \cite{huang10}) is to discretize the conditioning set $Z$ to a set of bins and perform simple independence tests in each bin. This strategy suffers from the curse of dimensionality, i.e., as the dimension of $Z$ grows, the number of required samples increases drastically. 

Another range of approaches is based on kernel methods. These procedures are comprised of two steps. In the first step,  $X$ and $Y$ are separately regressed on $Z$ via kernel ridge regression \citep{zhang2012kernel}. In the second step, the independence of the residuals is tested. This is often done using the Hilbert-Schmidt independence criterion (HSIC, \cite{gretton05}) or variants of it \citep{zhang2012kernel}. Recently, a so-called generalized covariance measure was used in \cite{shah2020} to test the independence of the residuals.
Theoretical guarantees for the second step, i.e., HSIC (as well as its most recent variants such as Nyström based independence criterion, see \cite{kalinke23}), are now well understood. The standard parametric rate $O(n^{-1/2})$ can be achieved as long as appropriate conditions on the decay rate of the eigenvalues of the corresponding covariance operator are satisfied. 
For the first step, namely kernel ridge regression, as stated in \cite{shah2020}, achieving the parametric rate requires the function $f: z \mapsto \dE[X \cond Z = z]$ to be $\beta-$smooth (say, $\beta-$Hölder) with $\beta>d/2$. The aforementioned approaches suffer from two main drawbacks. The first one is the time complexity of kernel ridge regression: it involves inverting a $n \times n$ matrix, which in general, takes $O(n^3)$ operations. This cubic time complexity prevents the use of the method on large datasets, as we illustrate in Section \ref{sec:exp}. The time complexity of our proposed method is $O(d n^2)$ (see Remark \ref{rem:time_complexity}), which significantly improves on the former. The second drawback is theoretical: to the best of our knowledge, only the rate of convergence under the assumptions listed earlier is known, but for instance, no exponential concentration is established. As a consequence, with existing results, sample complexity guarantees for these methods are not as tight as ours. 

CI testing can also rely on estimating conditional mutual information: the works by \cite{liu2012} and \cite{singh2016} are the most relevant to our study. In both articles, the authors consider kernel density estimate of the joint density, in dimension $d=2$ in the latter and $d \geq 2$ in the former, and prove an exponential concentration for the \emph{plug-in} estimator of entropy. In \cite{poczos11a}, the consistency of a plug-in estimator for R\'enyi divergences using a $k$-nearest neighbors (KNN) estimate of the density was studied. To the best of our knowledge, the convergence rate of this KNN-based estimator is not known. Instead, here we consider the Von Mises estimator combined with a Kernel density estimate (KDE), which is both easier and efficient to compute, and most importantly, converges with better rates than the plug-in estimator\footnote{As a consequence, our smoothness assumption required to obtain the parametric rate $O(n^{-1/2})$ -- the best we can hope for -- is weaker than those in \cite{singh2016} ($\beta > d/2$ versus $\beta > d$).}. Empirically, as we shall see in Section \ref{sec:exp}, KNN-based estimators converge more slowly than the estimator using KDE. Another drawback of KNN is that it is not clear how to tune $k$ in practice, while KDE hyperparameters can be tuned by cross-validation, see e.g., \cite{wassermann23course}. Convergence properties, asymptotic normality, and rates for Von Mises estimators were studied in \cite{kandasamy15}. Our work complements these results by showing an exponential concentration inequality. 

\cite{belghazi2018mutual} proposed the mutual information neural network estimator (MINE) for estimating mutual information between two continuous random variables. 
They rewrite the mutual information using the dual representation of KL divergence \cite{donsker83}, which allows us to formulate the estimation problem as a function optimization. They consider a family of functions parameterized by a deep neural network and solve the optimization using stochastic gradient descent.
They derive a sample complexity bound for an estimator which approximates the true mutual information with $\eps$ error. The bound scales as $\Omega(d \log d /\eps^2)$ and could be applied directly to derive sample complexity bounds for causal discovery algorithms. However, such bounds are overly dependent on dimension $d$. In practice, the estimate requires over  $2 \times 10^6$ to begin to converge, which far exceeds the number of samples we require (see Section \ref{sec:exp}).

A recent work \cite{akbari2023} studies a different approach based on optimal transport (OT). The idea is first to learn a parametric lower triangular monotone map between the unknown joint distribution $p$ and a reference distribution $q$, typically a standard isotropic Gaussian. Once this map is learned, they can estimate the joint distribution $p$ and recover the conditional independence relationships. Although this method appears to be of practical interest, no theoretical guarantees are available, e.g. its consistency is not proved.

\vspace{-.3cm}
\paragraph{Sample complexity in causal discovery}
Sample complexity results for causal discovery are few, and guarantees are only known for discrete variables. \cite{wadhwa2021sample} established the sample complexity of two causal discovery algorithms: inferred causation (IC) and PC, using the CI test introduced by \cite{canonne18}.
This CI test is designed for testing the conditional independence for discrete distributions $p$, namely testing $H_0 := X \indep Y \cond \bZ$ vs $H_1 := \sup_{q} \mathrm{TV}(p,q) > \eps$, where $\mathrm{TV}$ is the total variation distance, $\eps>0$. The supremum is over discrete probability mass functions such that $X \indep_q Y \cond \bZ$. They showed that the output of the CI test is correct with a probability of at least 2/3.
In general, testing causal directions requires additional assumptions or information. In the bivariate discrete case, \cite{acharya2023sample} recently established the sample complexity of distinguishing cause from effect when interventional data is available. They obtain a sample complexity which depends on the domain size and characterize the trade-off between the number of observational and interventional samples.

\vspace{-0.3cm}
\section{Background on kernel density estimation and entropy estimation}\label{sec:background}
\vspace{-0.2cm}
We begin by presenting some definitions and notations that appear throughout the paper. 
Notations $f=o(g), f=O(g)$ and $f=\Theta(g)$ refer to standard Landau notations. The norm $\| \cdot \|_1$ denotes the $L^1$ norm of a vector in $\dR^d$. We assume the $d-$dimensional vector $\bX$ takes values in $\cX$, a compact subset of $\dR^d$. Given a tuple $\bs=(s_1, \ldots, s_d)$ of non negative integers, we define $|\bs|:=\sum_{i=1}^{d} s_j$, ${\bx}^{\bs} := x_1^{s_1} \cdots x_d^{s_d}$, $\bs! := s_1! \cdots s_d!$, and $D^{\bs}$ denotes the operator
    $D^{\bs} := \frac{\partial^{|\bs|}}{\partial^{s_1}x_1 \ldots \partial^{s_d}x_d} \, $.
\begin{definition}[Hölder class, see e.g. \cite{tsybakov2008}, Definition 1.2]\label{def:holder}
    For $L>0$ and a positive integer $\beta$, $f: \cX \subseteq \dR^d \to \dR$ belongs to the \emph{Hölder class} $\Sigma(\beta,L)$ on $\cX$ if $f$ is $ \beta $ times differentiable, and if  for $\bs = (s_1, \ldots, s_d)$ such that $|\bs|=\beta$, $D^{\bs} f$ is bounded by $L$, uniformly in $\bs$ and $\bx$, that is
        $\sup_{\bs:|\bs|=\beta} \sup_{\bx \in \cX} |D^{\bs} f(\bx)| \leq L  \, .$
    $f$ is said to be \emph{$\beta$-Hölder smooth} if $f \in \Sigma(\beta,L)$ for some $L>0$.
\end{definition}

For any $k$ times differentiable function $g$ on $\cX \subseteq \dR^d$, and $\ba \in \cX$, we denote by $g_{k,\ba}$ the truncated degree $k$ Taylor expansion of $g$ at $\ba$, i.e.
\begin{equation*}
    g_{k,\ba}(\bx) := \sum_{s : |s| \leq k} \frac{D^{\bs}g(\ba)}{{\bs}!}(\bx-\ba)^{\bs} \, .
\end{equation*}
\subsection{Plug-in versus Von Mises estimator for entropy}\label{sec:von_mises}

Assume we have access to $n$ samples $(\bx^{(i)})_{1 \leq i \leq n} = ((x^{(i)}_1, \ldots, x^{(i)}_d))_{1 \leq i \leq n} $ of a $d-$dimensional random vector  $\bX=(X_1, \ldots, X_d)$, with density $p$ with a compact support $\cX$ in $\dR^d$. We seek to estimate the joint entropy 
\begin{equation}\label{eq:joint_entropy}
    H(p) := H(X_1, \ldots, X_d) = - \int_{\cX} p(\bx) \log p(\bx) \, \bdx  \, .
\end{equation} The \emph{plug-in} estimator of $H$ is given by 
\begin{equation}\label{eq:plug_in}
    \what{H}_{\mathrm{plug-in}} := - \int_{\cX} \what{p}(\bx) \log \what{p}(\bx) \, \bdx  ,
    \vspace{-0.2cm}
\end{equation} 
where $\what{p}$ is an estimate of the joint probability density. As discussed in the related work, this estimator was studied by \cite{liu2012} and \cite{singh2016}. In practice, computing the numerical approximation of the integral in \eqref{eq:plug_in} is costly when dimension $d$ increases. Herein, we study the \emph{Von Mises} estimator defined as follows:
\begin{equation}\label{eq:von_mises_estimator}
    \what{H}_{\mathrm{vm}} := - \frac{2}{n} \sum_{i=n/2+1}^{n} \log \what{p}(\bx^{(i)}) \, .
    \vspace{-0.2cm}
\end{equation}
To estimate the entropy, the data is split into two parts. The first part is used to estimate the density $\what{p}_h$, and the second half is used to estimate $\what{H}_{\mathrm{vm}}$ using $\what{p}_h$ according to \eqref{eq:von_mises_estimator}.
Note that using Taylor expansion of $p \mapsto -p \log p$ around $\what{p}$ in \eqref{eq:joint_entropy} results in\footnote{This is up to justifying swapping the $O$ and the integral, which will be done later in the proof of Theorem \ref{th:conc_entropy}.}:
\begin{multline}\label{eq:dvlp_von_mises}
    \Scale[0.99]{H(p)= H(\what{p}) - \int_{\cX}(\log \what{p}(\bx) + 1)(p(\bx)-\what{p}(\bx))  \, \bdx} \\
    \Scale[0.99]{\quad + O\left( \int_{\cX} (p(\bx)-\what{p}(\bx))^2  \, \bdx \right)} \\
    \Scale[0.99]{= - \int_{\cX} p(\bx) \log \what{p}(\bx) \, \bdx + O\left( \int_{\cX} (p(\bx)-\what{p}(\bx))^2 \, \bdx \right) ,}
\end{multline}
since $\int_{\cX} p(\bx) \, \bdx  = \int_{\cX} \what{p}(\bx) \, \bdx =1$. This motivates the estimation of $H(p)$ with $- \int_{\cX} p(\bx) \log \what{p}(\bx) \, \bdx $. $\what{H}_{\mathrm{vm}}$ in \eqref{eq:von_mises_estimator} is derived by replacing the integral with the Monte Carlo sum.
Expansion \eqref{eq:dvlp_von_mises} is often referred to as the \emph{Von Mises expansion}, see \cite{krishnamurthy2014}.

As discussed earlier, to estimate the entropy $H$, we need to estimate the joint density $p$. We discuss an approach based on Kernel density estimation in the next section. Please refer to \cite{tsybakov2008} for more details.

\subsection{Kernel density estimation} \label{sec:kde} 
\vspace{-.3cm}
Multivariate kernel density estimation (KDE) provides an estimate of the density $p$ of the following form. For all $\bx = (x_1, \ldots, x_d)$ in $\cX$,
\begin{equation}\label{eq:def_ph}
    \what{p}_h(\bx) := \frac{2}{n} \sum_{i=1}^{n/2} \frac{1}{h^d} K_d \left( \frac{\bx^{(i)} - \bx}{h}\right) , 
\end{equation} where $h := h(n)>0$ is the \emph{bandwidth} and $K_d: \dR^d \to \dR$ is a \emph{kernel}, satisfying $\int K_d(\bx) \, \bdx = 1$ to ensure that $\int_{\cX} \what{p}_h(\bx) \, \bdx =1$. Recall that we use the first half of the samples $(\bx^{(i)})_{1 \leq m \leq n/2}$ to compute $\what{p}_h$.

The choice of $K_d$ is generally very open. However, when approximating smooth densities, kernels of order $\ell >0$ are very useful. We define them below. 

\begin{definition}[Kernels of given order]\label{def:kernel_order}
    Let $\ell$ be a positive integer. We say that a kernel $K_d: \dR^d \to \dR$ is a \emph{kernel of order $\ell$} if $\bx \mapsto {\bx}^{\bs} K(\bx)$ is integrable for all $|\bs| \leq \ell$ and 
    \begin{equation*}
        \int  K(\bx) \mathrm{d}\bx = 1 \mbox{ and } \int {\bx}^{\bs}  K(\bx) \mathrm{d}\bx = 0 \mbox{ for } |\bs| = 1, \ldots, \ell \, .
    \end{equation*}
    In particular, a kernel of order $\ell$ is orthogonal to any polynomial of degree $\leq \ell$ with no constant term.
\end{definition}  
\vspace{-0.3cm}
\paragraph{Product kernels}
In our practical implementations, we will consider product kernels of the form 
\begin{equation*}
    K_d(\bx) = \otimes_d K(\bx) := K(x_1) \cdot K(x_2) \cdots  K(x_d), 
\end{equation*} where $K$ is a one-dimensional kernel satisfying $\int K(u) \, \mathrm{d}u = 1$. We hence have
\begin{equation*}
    \what{p}_h(\bx) := \frac{2}{n} \sum_{i=1}^{n/2} \frac{1}{h^d} K \left( \frac{x_1^{(i)} - x_1}{h}\right) \cdots K \left( \frac{x^{(i)}_d - x_d}{h}\right) \, .
\end{equation*}
    Note that in view of Definition \ref{def:kernel_order}, if $K$ is of order $\ell > 0$ then $\otimes_d K(\bx)$ is also of order $\ell$.
\vspace{-0.3cm}
\paragraph{Legendre kernels}
 \cite{tsybakov2008} (Section 1.2.2) provides a method to build a one-dimensional kernel supported on $[-1,1]$ of any given order $\beta$ as follows. Let $\{ \phi_m \}_{m \geq 0}$ be the orthonormal basis of Legendre polynomials $L^2([-1,1],\mathrm{d}x)$ defined by 
\begin{equation}\label{eq:legendre}
    \phi_m(x) := \sqrt{\frac{2m+1}{2}} \frac{1}{2^m m!} \frac{\mathrm{d}^m}{\mathrm{d}x^m} [(x^2-1)^m] 
\end{equation} for all $m \geq 0$, with $\phi_0(x) = \frac{1}{\sqrt{2}}$ by convention. Then for $\beta>0$, the kernel $K_\beta$ defined by 
\begin{equation}\label{eq:Kbeta}
    K_{\beta}(x) := \sum_{m = 0}^{\beta} \phi_m(0) \phi_m(x) \mathbbm{1}_{|x| \leq 1}
\end{equation} 
is of order\footnote{Note that by symmetry of Legendre polynomials, $\phi_{2m+1}(0) = 0$ for all $m \geq 0$, hence $K_{2 \ell} = K_{2 \ell + 1}$. We will hence often consider $\beta$ to be odd so that $K_{\beta}$ is exactly of order $\beta$ and not of order $\beta+1$.} $\beta$. We will henceforth refer to kernels $K_{\beta}$ as Legendre kernels. 

\section{Exponential concentration for entropy estimation}\label{sec:theory}
\vspace{-0.3cm}
In this section, we present one of our main results, the exponential concentration for our MI estimator. For pedagogical reasons, we begin with presenting exponential concentration for the KDE estimator, a result known in the literature. We then proceed to establish an exponential concentration for the Von Mises estimator of entropy and, finally, the  MI estimator. A tight upper bound on the error rate of our VM-CI test follows as a corollary. 
We recall that the kernel density estimator $\what{p}_h$ is defined in \eqref{eq:def_ph}.

\subsection{Exponential concentration for multivariate kernel density estimation}

To obtain the exponential concentration of $\what{p}_h$, we need to first establish a few technical conditions on kernel $K_d$.

\begin{assumption}[Assumptions on the kernel $K_d$]\label{ass:kernel}
\, 
\vspace{-0.35cm}
\begin{itemize}
    \item[$(1a)$] $K_d$ is uniformly upper bounded by some $\kappa>0$,

    \item[$(1b)$] $K_d$ is of order $\beta$ (see Definition \ref{def:kernel_order}),

    \item[$(1c)$] The class of functions
    $$ \cF := \left\{ K_d \left( \frac{\bx-\cdot}{h}\right), \bx \in \dR^d, h>0 \right\}$$ satisfies 
    $ \sup_Q N(\cF,L^2(Q),\eps \| F \|_{L^2(Q)}) \leq \left( \frac{A}{\eps}\right)^v,$
\end{itemize}
where $A$ and $v$ are for two positive numbers, $N(T,d,\eps)$ denotes the $\eps$-covering number (see, e.g. \cite{lafferty08}) of the metric space $(T,d)$, $F$ is the envelope function of $\cF$ (i.e. $F(\bx):= \sup_{f \in \cF} |f(\bx)|$),  and the supremum is taken over the set of all probability measures on $\dR^d$. 
The quantity $v$ is called the $VC$ dimension of $\cF$.
\end{assumption}

Assumption $(1c)$ appears in \cite{gine02, rinaldo2010} and is at the heart of the exponential inequality obtained in \cite{liu2012}. This assumption is known to hold for a large class of kernels \cite{vandervaart1996, nolan87}, such as compactly supported polynomial kernels and Gaussian kernels\footnote{Assumption $(1c)$ also holds in the following examples: if $K_d(\bx) = \phi(T(\bx))$, where $T$ is a polynomial in $\dR^d$ and $\phi$ a bounded real function of bounded variation if the graph of $K_d$ is a pyramid (truncated or not); or if $K_d = \mathbbm{1}_{I_1 \times \cdots \times I_d}$ where $I_1, \ldots, I_d$ are closed intervals of $\dR$ \citep{vandervaart1996, nolan87}.}.

\begin{remark}\label{rem:example}
Kernel $K_{\beta}$ defined in \eqref{eq:Kbeta} satisfies Assumption \ref{ass:kernel}. Therefore, product kernel $K_d := \otimes_d K_{\beta}$ inherits the same property. 
\end{remark}

\begin{theorem}[Exponential concentration of $\| p -\what{p}_h\|_{\infty}$]\label{thm:conc_sup}
    Assume that $p$ belongs to the \emph{Hölder class} $\Sigma(\beta,L)$  on $\cX$ for some $\beta,L>0$ and that $K_d$ satisfies Assumption \ref{ass:kernel}.
   Let $h = h_n = \Theta(n^{-\frac{1}{2\beta + d}})$. Then, there exist $C_1, C_2, \eps_0>0$ and $n_0 \geq 0$ such that for all $ n^{-\frac{\beta}{2\beta + d}} (\log n)^{1/2} \leq \eps_n \leq \eps_0$:
    \begin{equation*}
       \forall n \geq n_0, \;  \dP\left(\|p-\what{p}_h\|_{\infty} > \eps_n \right) \leq C_1 \exp(- C_2 n^{\frac{2 \beta}{2 \beta + d}} \eps_n^2) \, .
    \end{equation*}
\end{theorem}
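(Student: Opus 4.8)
The plan is to split the error into a deterministic bias term and a stochastic fluctuation term via the triangle inequality,
\begin{equation*}
\|p - \what{p}_h\|_{\infty} \leq \underbrace{\|\dE \what{p}_h - p\|_{\infty}}_{\text{bias}} + \underbrace{\|\what{p}_h - \dE \what{p}_h\|_{\infty}}_{\text{fluctuation}},
\end{equation*}
and to show that the bias is \emph{deterministically} smaller than $\eps_n/2$ for $n$ large, while the fluctuation exceeds $\eps_n/2$ only with the claimed exponentially small probability.

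\emph{Bias.} Writing $\dE \what{p}_h(\bx) = \int K_d(\bu)\, p(\bx + h\bu)\,\mathrm{d}\bu$ after the change of variables $\bu = (\bx^{(i)} - \bx)/h$, I would Taylor-expand $\bu \mapsto p(\bx + h\bu)$ to order $\beta$ around $\bx$. Since $K_d$ is a kernel of order $\beta$ (Assumption \ref{ass:kernel}, $(1b)$), all monomials of degree $1,\ldots,\beta$ integrate against $K_d$ to zero, so only the constant term $p(\bx)$ survives and one is left with the degree-$\beta$ Taylor remainder. Bounding the $\beta$-th derivatives of $p$ by $L$ (the assumption $p \in \Sigma(\beta,L)$) and using integrability of $\bu \mapsto \bu^{\bs} K_d(\bu)$ gives $\|\dE\what{p}_h - p\|_{\infty} \leq c\, h^{\beta}$ for a constant $c = c(L,\beta,d,K_d)$, uniformly in $\bx$. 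With $h = \Theta(n^{-1/(2\beta+d)})$ this is $O(n^{-\beta/(2\beta+d)})$; since $\eps_n \geq n^{-\beta/(2\beta+d)}(\log n)^{1/2}$ and $(\log n)^{1/2}\to\infty$, the bias is $\leq \eps_n/2$ for all $n \geq n_0$.

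\emph{Fluctuation.} It remains to bound $\dP(\|\what{p}_h - \dE\what{p}_h\|_{\infty} > \eps_n/2)$. Setting $m = n/2$ and scaling by $h^d$, the quantity $h^d(\what{p}_h(\bx) - \dE\what{p}_h(\bx))$ is an empirical process $(P_m - P)f := \frac{1}{m}\sum_{i=1}^m f(\bx^{(i)}) - \dE f(\bX)$ indexed by the VC-type class $\cF$ of Assumption \ref{ass:kernel}, $(1c)$. The functions of $\cF$ are bounded by $\kappa$ (Assumption $(1a)$), and a change of variables shows $\sigma^2 := \sup_{f\in\cF}\mathrm{Var}(f(\bX)) \leq \|p\|_{\infty}\|K_d\|_{L^2}^2\, h^d = O(h^d)$. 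I would then combine two ingredients: first, the standard bound on the expected supremum of an empirical process over a VC-type class (as in \cite{gine02}), which yields $\dE\sup_{f\in\cF}|(P_m-P)f| \lesssim \sqrt{v\sigma^2\log(1/\sigma)/m} + v\kappa\log(1/\sigma)/m \asymp n^{-(\beta+d)/(2\beta+d)}(\log n)^{1/2}$; second, Talagrand's concentration inequality for the supremum of an empirical process. The expected-supremum order matches $h^d\eps_n$ exactly at the boundary $\eps_n = n^{-\beta/(2\beta+d)}(\log n)^{1/2}$, so for a suitable constant the deviation $t := h^d\eps_n/2 - \dE\sup_{f\in\cF}|(P_m-P)f|$ satisfies $t \geq h^d\eps_n/4$. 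Feeding this into Talagrand and checking that $\sigma^2 \asymp h^d$ dominates both the $\kappa t$ and $\kappa\,\dE\sup$ terms in the variance denominator (which holds because $\eps_n \leq \eps_0$ and $n^{2\beta/(2\beta+d)} \gtrsim \log n$) keeps us in the Gaussian regime, where the exponent is of order $mt^2/\sigma^2 \asymp n(h^d\eps_n)^2/h^d = n\,h^d \eps_n^2 = n^{2\beta/(2\beta+d)}\eps_n^2$, which is exactly the claimed bound.

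\emph{Main obstacle.} The delicate step is the fluctuation bound: one needs genuinely uniform-in-$\bx$ (i.e.\ supremum) control rather than a pointwise one, which is precisely where Assumption \ref{ass:kernel}, $(1c)$, is indispensable, and one must propagate the variance proxy $\sigma^2 = O(h^d)$ through Talagrand's inequality carefully -- staying in the small-variance (sub-Gaussian) regime rather than the Bernstein-type tail -- to land on the exponent $n^{2\beta/(2\beta+d)}\eps_n^2$. The precise lower bound imposed on $\eps_n$ is exactly what guarantees that the expected supremum is absorbed into half the threshold, so that the concentration term controls the probability.
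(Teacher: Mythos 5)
Your proof is correct and follows essentially the same route as the paper: the identical bias/fluctuation decomposition, the same Taylor-expansion-plus-kernel-order argument yielding the uniform $O(h^\beta)$ bias (absorbed by the $(\log n)^{1/2}$ factor in the lower bound on $\eps_n$), and the same $n h^d \eps_n^2 = n^{\frac{2\beta}{2\beta+d}}\eps_n^2$ exponent for the stochastic term. The only difference is that where the paper invokes Proposition 9 of \cite{rinaldo2010} as a black box for the fluctuation, you re-derive it inline via the expected-supremum bound for VC-type classes together with Talagrand's concentration inequality --- which is precisely how that proposition is proved in \cite{gine02, rinaldo2010}, so your argument is a self-contained unpacking of the same citation rather than a different approach.
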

The proof of this result, which appears in Appendix \ref{app:proofs} for the sake of completeness, follows from standard bias analysis and results in \cite{rinaldo2010}. 

\vspace{-0.3cm}
\subsection{Exponential concentration for entropy estimation}
Before stating our result on the exponential concentration of estimator $\what{H}_{\mathrm{vm}}$ -- which we recall is defined in \eqref{eq:von_mises_estimator} -- we describe the conditions which density $p$ must satisfy.

\begin{assumption}[Assumptions on the density $p$]\label{ass:density}
\, 
\vspace{-0.35cm}
\begin{itemize}
    \item[$(2a)$] The support of $p$, $\cX$, is a compact set  in $\dR^d$,

    \item[$(2b)$] $p$ is lower-bounded on $\cX$ by some $p_{\min} >0$,

    \item[$(2c)$] $p$ belongs to \emph{Hölder class} $\Sigma(\beta,L)$ for some $L>0$.
\end{itemize}
\end{assumption} 
\begin{remark}[Positivity of $\what{p}_h$]\label{rem:positivity_ph}
    The kernel $K_d$ can take negative values\footnote{Note e.g. that any kernel or order $\beta \geq 3$ needs to take negative values by definition.}, as does $\what{p}_h$.
    Hence, $\log \what{p}_h$ will not be defined in general, which poses issues in the definition of  $\what{H}_{\mathrm{vm}}$.
    As proved by Giné and Guillou (see Theorem 2.3 in \cite{gine02}), Theorem \ref{thm:conc_sup} together with \eqref{eq:proof:thm:conc_sup3} and an application of Borel-Cantelli Lemma shows that almost surely, $n^{\frac{\beta}{2\beta + d}}(\log n)^{-1/2} \| p -\what{p}_h\|_{\infty}$ converges to some bounded random variable $C$. As a result, if $p$ satisfies $(2a)$, then almost surely there exists $n_0$ such that for $n \geq n_0$, $\what{p}_h$ is point-wise positive. In the sequel, indeed, we assume that $n$ is large enough.
\end{remark}


\begin{theorem}[Exponential concentration of  $\what{H}_{\mathrm{vm}}$ in \eqref{eq:von_mises_estimator}]\label{th:conc_entropy}
    Assume that $K_d$ satisfies Assumption \ref{ass:kernel} and that $p$ satisfies Assumption \ref{ass:density}. 
    Let $h = h_n = \Theta(n^{-\frac{1}{2\beta + d}})$. Then, there exist $C_1, C_2, C'_1, C'_2, \eps_0>0$ and $n_0 \geq 0$, such that for all $\eps_n$ such that $\max (n^{-\frac{2\beta}{2\beta + d}} \log n, n^{-1/2}) \leq \eps_n \leq \eps_0$: 
    \vspace{-0.2cm}
    \begin{multline*}
       \forall n \geq n_0, \;  \dP(|\what{H}_{\mathrm{vm}}-H(p)| > \eps_n ) \leq C_1 e^{ - C_2 n^{\frac{2 \beta}{2 \beta + d}} \eps_n}\\
       +C'_1 e^{- C'_2 n^{1/2} \eps_n} \, .   
    \end{multline*}
\end{theorem}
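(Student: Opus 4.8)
The plan is to control the error $|\what{H}_{\mathrm{vm}} - H(p)|$ by decomposing it according to the Von Mises expansion \eqref{eq:dvlp_von_mises}, isolating a deterministic bias-type term, a Monte Carlo fluctuation term, and a remainder governed by the $L^2$ density error. First I would write
\begin{multline*}
    \what{H}_{\mathrm{vm}} - H(p) = \left( -\frac{2}{n}\sum_{i=n/2+1}^n \log\what{p}_h(\bx^{(i)}) + \int_{\cX} p(\bx)\log\what{p}_h(\bx)\,\bdx \right) \\
    + \left( -\int_{\cX} p(\bx)\log\what{p}_h(\bx)\,\bdx - H(p) \right),
\end{multline*}
where the first parenthesis is, conditionally on the first half of the data, a centered average of i.i.d.\ bounded terms (the samples $\bx^{(i)}$ for $i>n/2$ are independent of $\what{p}_h$), and the second parenthesis is exactly the remainder term analyzed in \eqref{eq:dvlp_von_mises}, which is $O\left(\int_{\cX}(p-\what{p}_h)^2\,\bdx\right)$ plus the swapping-of-$O$-and-integral justification promised in the footnote.

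For the remainder term I would invoke Assumption \ref{ass:density}: positivity $(2b)$ and compactness $(2a)$ guarantee that, on the high-probability event $\{\|p-\what{p}_h\|_\infty \le \eps_0\}$ from Theorem \ref{thm:conc_sup} with $\what{p}_h$ pointwise positive (Remark \ref{rem:positivity_ph}), the function $p\mapsto -p\log p$ has its second derivative uniformly bounded, so the second-order Taylor remainder is controlled by $\|p-\what{p}_h\|_\infty^2 \cdot |\cX|$. Thus the remainder is bounded by a constant times $\|p-\what{p}_h\|_\infty^2$, and Theorem \ref{thm:conc_sup} applied at level $\sqrt{\eps_n}$ yields $\dP(\text{remainder} > \eps_n/2) \le C_1 \exp(-C_2 n^{2\beta/(2\beta+d)}\eps_n)$ — note the error appears \emph{linearly} because we squared, which explains the first exponential in the statement and the lower bound $n^{-2\beta/(2\beta+d)}\log n$ on $\eps_n$.

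For the Monte Carlo term I would condition on the first half of the samples and treat $-\log\what{p}_h(\bx^{(i)})$, $i>n/2$, as i.i.d.\ bounded random variables (boundedness again from $(2b)$, compactness, and the boundedness of $\what{p}_h$ via the kernel bound $(1a)$ and Remark \ref{rem:positivity_ph}). A Hoeffding or Bernstein inequality over the $n/2$ terms gives $\dP(\text{MC term} > \eps_n/2 \mid \text{first half}) \le C_1' \exp(-C_2' n^{1/2}\eps_n)$ on the favorable event; the $n^{1/2}$ scaling comes from $\sqrt{n/2}$ in the sub-Gaussian tail and produces the second exponential, with the $n^{-1/2}$ lower bound on $\eps_n$ reflecting the parametric Monte Carlo rate. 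Combining the two events by a union bound and integrating out the conditioning then yields the stated two-term bound.

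The main obstacle I expect is making the boundedness of $\log\what{p}_h$ fully rigorous rather than heuristic: $\what{p}_h$ is positive only almost surely for $n$ large (Remark \ref{rem:positivity_ph}), and its logarithm must be uniformly bounded \emph{above and below} on $\cX$ to apply Hoeffding and to control the Taylor remainder. This requires carefully intersecting the concentration event of Theorem \ref{thm:conc_sup} (which forces $\what{p}_h \ge p_{\min}/2 > 0$ and $\what{p}_h \le \|p\|_\infty + \eps_0$ uniformly) with the Monte Carlo randomness, and handling the complementary low-probability event by absorbing it into the constants $C_1, C_1'$. A secondary subtlety is justifying the interchange of the $O(\cdot)$ and the integral in \eqref{eq:dvlp_von_mises}, which is where the uniform control of the second derivative of $-p\log p$ on the bounded-density event does the real work.
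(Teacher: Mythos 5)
Your proposal is correct and takes essentially the same route as the paper's proof: the same decomposition into a conditionally centered Monte Carlo average plus the Von Mises remainder, with the remainder bounded by $\|p-\what{p}_h\|_\infty^2$ via the $2/p_{\min}$ second-derivative bound on the positivity event of Remark \ref{rem:positivity_ph} and Theorem \ref{thm:conc_sup} applied at level $\sqrt{\eps_n}$ (whence the linear $\eps_n$ in the first exponential), and a Hoeffding-type bound for the fluctuation term where the constraint $\eps_n \geq n^{-1/2}$ converts $e^{-c\,n\eps_n^2}$ into $e^{-c\,n^{1/2}\eps_n}$ exactly as in the paper. The only cosmetic differences are that you condition explicitly on the first half of the sample where the paper invokes Azuma--Hoeffding directly, and that the $n^{1/2}$ scaling is better attributed to the assumed lower bound $\eps_n \geq n^{-1/2}$ than to the ``$\sqrt{n/2}$ in the sub-Gaussian tail,'' a point your own discussion already contains.
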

\begin{remark} Note that when $p$ is smooth enough ($\beta > d/2$), $\what{H}_{\mathrm{vm}}$ converges at parametric rate $O(n^{- 1/2})$, the best rate we can hope for. 
\end{remark}
\begin{remark}
    The well-known rate $O(n^{-\min(\frac{1}{2},\frac{2\beta}{2\beta+d})})$ for Von Mises entropy estimation (see \cite{wassermann23course}) is immediate from Theorem \ref{th:conc_entropy}.
   Note that when $d=2$, we retrieve the concentration inequality of \cite{liu2012}. The minimax rates for entropy estimation are known to be slightly better, $O(n^{- \min (\frac{1}{2},\frac{4 \beta}{4 \beta + d})})$, but come at the cost of more complex estimators, requiring higher order corrections in the Von Mises expansion \eqref{eq:dvlp_von_mises}. 
\end{remark}

\subsection{Consequences for error rates of VM-CI}

We start with an immediate corollary of Theorem \ref{th:conc_entropy}, which states a dimension-free exponential concentration bound for conditional mutual information as long as the probability distributions are smooth enough. Given our application of interest, causal discovery, we assume  $X$ and $Y$ are both one-dimensional, but $\bZ$ is of dimension $d_{\bZ}$. In view of \eqref{eq:CMI_entropy}, we can estimate $I(X;Y \cond \bZ)$ by
\begin{flalign*}\label{eq:est_CMI}
    \Scale[0.9]{\what{I}_{\mathrm{vm}} := \what{H}_{\mathrm{vm}}(X,\bZ) + \what{H}_{\mathrm{vm}}(Y,\bZ) - \what{H}_{\mathrm{vm}}(\bZ) - \what{H}_{\mathrm{vm}}(X,Y,\bZ) \, ,}
\end{flalign*} where $\what{H}_{\mathrm{vm}}$ is the Von Mises estimator in \eqref{eq:von_mises_estimator}.  

\begin{corollary}[Dimension-free exponential concentration of $\what{I}_{\mathrm{vm}}$]\label{cor:conc_MI}
    Assume that 
    \vspace{-0.35cm}
    \begin{itemize}
        \item joint distributions $p_{X,Y, \bZ}$, $p_{X, \bZ}$, $p_{Y, \bZ}$, and $p_{\bZ}$ satisfy Assumption \ref{ass:density} for some $\beta >0$ such that $\beta > 1 + d_{\bZ}/2$;
        
        \item kernels 
        involved in estimators $\what{H}_{\mathrm{vm}}(X,Y,\bZ)$ (resp. $\what{H}_{\mathrm{vm}}(X,\bZ)$, $\what{H}_{\mathrm{vm}}(Y,\bZ)$ and $\what{H}_{\mathrm{vm}}(\bZ)$) satisfy Assumption \ref{ass:kernel}, with $\beta$ given in the previous bullet. 
        
    \end{itemize} Choose bandwidth $h_n$ as follows:
    \begin{equation*}
        h_n = 
        \left\{
    \begin{array}{ll}
        \Scale[0.9]{\Theta(n^{-\frac{1}{2\beta + 2 + d_{\bZ}}}) \mbox{ for } \what{H}_{\mathrm{vm}}(X,Y,\bZ),} \\
        \Scale[0.9]{\Theta(n^{-\frac{1}{2\beta + 1 + d_{\bZ}}}) \mbox{ for } \what{H}_{\mathrm{vm}}(X,\bZ) \mbox{ and } \what{H}_{\mathrm{vm}}(Y,\bZ),} \\
        \Scale[0.9]{\Theta(n^{-\frac{1}{2\beta + d_{\bZ}}}) \mbox{  for } \what{H}_{\mathrm{vm}}(\bZ),}
    \end{array}
\right.
    \end{equation*} then, there exist $C_1, C_2,\eps_0>0$ and $n_0 \geq 0$ such that for all constant $ 0 < \eps \leq \eps_0$,
    \begin{equation*}
       \Scale[0.93]{\forall n \geq n_0, \;  \dP(|\what{I}_{\mathrm{vm}}- I(X;Y \cond \bZ)| > \eps ) \leq  C_1 \exp(- C_2 n^{1/2} \eps ) \, .}
    \end{equation*}
\end{corollary}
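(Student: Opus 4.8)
The plan is to derive Corollary \ref{cor:conc_MI} directly from the exponential concentration of $\what{H}_{\mathrm{vm}}$ established in Theorem \ref{th:conc_entropy}, applied separately to each of the four entropy estimators appearing in the decomposition \eqref{eq:CMI_entropy}. The key observation is that the conditional mutual information is written as a fixed linear combination of four joint entropies, namely $I(X;Y\cond\bZ)=H(X,\bZ)+H(Y,\bZ)-H(\bZ)-H(X,Y,\bZ)$, and the estimator $\what{I}_{\mathrm{vm}}$ is the same linear combination of the corresponding Von Mises estimators. First I would note that each of the four densities $p_{X,Y,\bZ}$, $p_{X,\bZ}$, $p_{Y,\bZ}$, $p_{\bZ}$ satisfies Assumption \ref{ass:density} by hypothesis, and that each lives in a space of respective dimension $2+d_{\bZ}$, $1+d_{\bZ}$, $1+d_{\bZ}$, and $d_{\bZ}$. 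The bandwidth choices are tailored precisely so that in each case $h_n=\Theta(n^{-\frac{1}{2\beta+d}})$ with $d$ being the corresponding dimension, which is exactly the regime required by Theorem \ref{th:conc_entropy}.

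The core of the argument is then a union bound. By the triangle inequality, if $|\what{I}_{\mathrm{vm}}-I(X;Y\cond\bZ)|>\eps$, then at least one of the four entropy estimators must deviate from its target by more than $\eps/4$. Applying Theorem \ref{th:conc_entropy} to each term with $\eps_n=\eps/4$, I would obtain for each a bound of the form $C\exp(-c\,n^{\frac{2\beta}{2\beta+d}}\eps/4)+C'\exp(-c'\,n^{1/2}\eps/4)$, where $d$ ranges over the four dimensions. Summing these four bounds and absorbing constants yields the claimed single-exponential bound, provided the slowest exponent dominates. This is where the smoothness hypothesis $\beta>1+d_{\bZ}/2$ enters: it is exactly the condition ensuring that $\frac{2\beta}{2\beta+d}\geq\frac12$ for the largest relevant dimension $d=2+d_{\bZ}$ (since $2\beta\geq 2+d_{\bZ}$ forces $\frac{2\beta}{2\beta+(2+d_{\bZ})}\geq\frac12$). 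Consequently every term of the first type decays at least as fast as $\exp(-c\,n^{1/2}\eps)$, and the overall bound collapses to the dimension-free parametric rate $C_1\exp(-C_2 n^{1/2}\eps)$.

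Two points require care and I would address them explicitly. The first is the validity regime for $\eps$: Theorem \ref{th:conc_entropy} requires $\eps_n\ge\max(n^{-\frac{2\beta}{2\beta+d}}\log n,\,n^{-1/2})$, so to apply it with a fixed constant $\eps$ I must take $n$ large enough (absorbing the lower constraint into $n_0$) and $\eps\le\eps_0$ with $\eps_0$ the minimum of the four thresholds. Since $\eps$ is held constant while $n\to\infty$, these lower bounds are eventually satisfied, so restricting to $n\ge n_0$ suffices. The second is the bookkeeping of constants across the four applications: each invocation of Theorem \ref{th:conc_entropy} produces its own $(C_1,C_2,C_1',C_2')$ depending on the dimension and the Hölder/kernel parameters, and I would simply take the worst (largest prefactor, smallest rate constant) across the four and across the two exponential families, then fold the factor of $4$ from the union bound and the factor $1/4$ inside the exponent into fresh constants $C_1,C_2$.

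The step I expect to be the genuine crux is verifying that the smoothness condition $\beta>1+d_{\bZ}/2$ is \emph{exactly} what makes the first exponential term (with rate $n^{\frac{2\beta}{2\beta+d}}$) no slower than the parametric term (with rate $n^{1/2}$) simultaneously for all four dimensions. The binding constraint is the highest-dimensional term $\what{H}_{\mathrm{vm}}(X,Y,\bZ)$ of dimension $2+d_{\bZ}$; checking that $\frac{2\beta}{2\beta+2+d_{\bZ}}\ge\frac12\iff 2\beta\ge 2+d_{\bZ}\iff\beta\ge 1+d_{\bZ}/2$ confirms that the stated hypothesis is tight and that no slower-than-$n^{1/2}$ term survives. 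Everything else is a routine triangle-inequality-plus-union-bound argument with constant chasing.
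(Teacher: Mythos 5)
Your proposal is correct and takes essentially the same route as the paper's proof, which likewise applies Theorem \ref{th:conc_entropy} to each of the four estimators $\what{H}_{\mathrm{vm}}(X,Y,\bZ)$, $\what{H}_{\mathrm{vm}}(X,\bZ)$, $\what{H}_{\mathrm{vm}}(Y,\bZ)$, $\what{H}_{\mathrm{vm}}(\bZ)$ and observes that $\beta > 1 + d_{\bZ}/2$ forces all three exponents $\frac{2\beta}{2\beta+(2+d_{\bZ})}$, $\frac{2\beta}{2\beta+(1+d_{\bZ})}$, $\frac{2\beta}{2\beta+d_{\bZ}}$ to exceed $1/2$, so the parametric term dominates. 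The triangle-inequality/union-bound step with deviations $\eps/4$ and the absorption of the validity thresholds for constant $\eps$ into $n_0$ that you spell out are exactly the details left implicit in the paper's one-line argument.
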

To provide performance guarantees for our CI test, we require the following mild assumption.
\begin{assumption}[Minimum level of dependency]\label{ass:min_level}
There exists $I_{min}>0$ such that 
$X$, $Y$, s and $\mathbf{Z}$ are either conditionally independent (i.e., ${X} \independent {Y} \cond {\mathbf{Z}}$) or $I(X;Y \cond \bZ) > I_{min}$.
\end{assumption}
Under the Assumption \ref{ass:min_level}, we can define the following hypothesis test.
\begin{equation*}
    H_0 := I(X;Y \cond \bZ)=0 \mbox{ vs. } H_1 := I(X;Y \cond \bZ) > I_{\min} \, .
\end{equation*} The test, VM-CI, is defined as follows.
\begin{equation}\label{eq:cor:rate_CItest}
        T_{\mathrm{VM-CI}} := \begin{cases}
            1 \quad \mbox{if } \what{I}_{\mathrm{vm}} > I_{\min} /2, \\
            0 \quad \mbox{elsehow. }
        \end{cases}
    \end{equation}

\begin{corollary}[Error rates for VM-CI]\label{cor:rate_CItest}
Under Assumption \ref{ass:min_level} as well as the assumptions stated in Corollary \ref{cor:conc_MI}, the sum of type one and type two errors for $T_{\mathrm{VM-CI}}$ is bounded by $O\left( \exp(-c n^{1/2} I_{\min}) \right)$ for some $c>0$. Hence, in order to achieve a confidence level $1-\alpha \in [0,1)$ it suffices that
    $ n \geq \Omega\left( \frac{1}{I_{\min}^2} \log^2\left( \frac{1}{\alpha}\right) \right) \, .$
\end{corollary}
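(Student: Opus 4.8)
The plan is to reduce everything to the single concentration bound of Corollary \ref{cor:conc_MI}, choosing the slack parameter to be exactly $\eps = I_{\min}/2$, the midpoint between the null value $0$ and the separation threshold $I_{\min}$. The decision rule in \eqref{eq:cor:rate_CItest} compares $\what{I}_{\mathrm{vm}}$ against $I_{\min}/2$, so both error types are triggered precisely when the estimator deviates from the truth by more than $I_{\min}/2$, which is exactly the event controlled by the corollary.

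First I would bound the type I error. Under $H_0$ we have $I(X;Y \cond \bZ) = 0$, and a false positive occurs when $\what{I}_{\mathrm{vm}} > I_{\min}/2$. This forces $|\what{I}_{\mathrm{vm}} - I(X;Y \cond \bZ)| > I_{\min}/2$, so Corollary \ref{cor:conc_MI} applied with $\eps = I_{\min}/2$ gives a bound of $C_1 \exp(-C_2 n^{1/2} I_{\min}/2)$. Next I would bound the type II error. Under $H_1$ we have $I(X;Y \cond \bZ) > I_{\min}$, and a false negative occurs when $\what{I}_{\mathrm{vm}} \leq I_{\min}/2$; by the reverse triangle inequality this again forces $|\what{I}_{\mathrm{vm}} - I(X;Y \cond \bZ)| > I_{\min} - I_{\min}/2 = I_{\min}/2$, and the same corollary yields an identical bound. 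Summing the two contributions, the total error is at most $2 C_1 \exp(-C_2 n^{1/2} I_{\min}/2) = O(\exp(-c n^{1/2} I_{\min}))$ with $c = C_2/2$, which is the first claim.

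Finally I would convert this into the stated sample complexity. Requiring the total error to fall below $\alpha$ amounts to solving $2 C_1 \exp(-c n^{1/2} I_{\min}) \leq \alpha$ for $n$; taking logarithms gives $n^{1/2} \geq (c\, I_{\min})^{-1} \log(2 C_1 / \alpha)$, hence $n \geq \Omega\big( I_{\min}^{-2} \log^2(1/\alpha) \big)$, matching the conclusion.

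The only delicate point—rather than a genuine obstacle—is the admissibility condition $\eps \leq \eps_0$ built into Corollary \ref{cor:conc_MI}: the argument above uses $\eps = I_{\min}/2$, so it directly applies when $I_{\min}/2 \leq \eps_0$. When $I_{\min}$ exceeds $2\eps_0$, the null and alternative are so well separated that one may instead invoke the corollary at the largest admissible $\eps = \eps_0$, which only sharpens the exponent, so the conclusion is unaffected. I would also note that the concentration estimate requires $n \geq n_0$; since $n_0$ is a fixed constant, this is subsumed by the $\Omega\big( I_{\min}^{-2} \log^2(1/\alpha) \big)$ requirement once $\alpha$ is small enough, so the sample-complexity statement stands.
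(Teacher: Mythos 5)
Your proposal is correct and takes essentially the same route as the paper's own proof: both apply Corollary \ref{cor:conc_MI} at deviation level $\eps = I_{\min}/2$ to show each error event entails $|\what{I}_{\mathrm{vm}} - I| > I_{\min}/2$, sum the two resulting bounds $C_1\exp(-C_2 n^{1/2} I_{\min}/2)$, and solve for $n$. Your added attention to the admissibility constraint $\eps \leq \eps_0$ and to $n \geq n_0$ (which the paper glosses over with ``for $n$ large enough'') is welcome, with one directional slip: invoking the bound at $\eps = \eps_0 < I_{\min}/2$ \emph{weakens} rather than sharpens the exponent, though the conclusion survives since $\eps_0$ and $I_{\min}$ are fixed constants that the implicit constant $c$ can absorb.
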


\begin{remark}[Time complexity of VM-CI] \label{rem:time_complexity}
Assume that each evaluation of $K_d$ is done in $O(d)$. Then, each appeal to $\what{p}_h$ takes $O(d n)$ operations. Hence, $\what{H}_{\mathrm{vm}}$, $\what{I}_{\mathrm{vm}}$, and VM-CI can be computed in $O(d n^2)$.
\end{remark}

\section{Application: sample complexity guarantees for causal discovery}\label{sec:causal}
In this section, we present a brief background on causal discovery and review two classic causal algorithms, PC and GS, before deriving their sample complexity when using the VM-CI test. Note that under appropriate assumptions, these sample complexities are optimal since they inherit the parametric convergence rate of VM-CI, which is the best we can hope for.

\subsection{Background on causal discovery}
A directed acyclic graph (DAG) is defined as $\cG = (\bX, E)$, where $\bX = \{ X_1, \ldots, X_m\}$ (resp. $E \subseteq \bX \times \bX$) denotes the set of vertices (resp. directed edges) of $\cG$, such that $\cG$ contains no directed cycle. Each vertex $X_k \in \bX$ represents a random variable. 
Vertices $X, Y \in \bX$ are called neighbors in $\cG$ if $(X, Y)$ or $(Y, X)$ belongs to $E$. We denote the set of neighbors of $X$ $\cG$ by $N_\mathcal{G}(X)$.
Causal discovery (a.k.a structure learning) is the task of learning the causal graph $\cG$ from $n$ i.i.d. samples drawn from the joint distribution $p$, commonly referred to as the observational distribution. 

We assume that $\cG$ and $p$ satisfy \emph{Markov} and \emph{faithfulness} properties, which state that conditional independence relationships in $p$ correspond to so-called d-separation (a graphical condition) in  $\cG$. We refer the reader to \cite{pearl2009causality} for definitions and further discussion on this topic. Two DAGs satisfying \emph{Markov} and \emph{faithfulness} properties are \emph{Markov equivalent} if they have the same set of d-separations (i.e., encode the same set of conditional independence). The equivalence class of a DAG $\cG$ is called the Markov equivalence class (MEC) of $\cG$.
It is well-known \citep{spirtes2000causation, pearl2009causality} that without further assumptions, we can only learn the underlying causal DAG up to its Markov equivalence from the observational data alone. 

\subsection{Parent-child (PC) algorithm \citep{spirtes2000causation}}
PC  begins with a complete, undirected graph $\cC$ on the vertex set $\bX$.
Starting from $\ell=0$, the algorithm considers pairs of variables $X$ and $Y$ adjacent in $\cC$  such that $|N_{\mathcal{C}}(X) \setminus {Y}| \geq \ell$.
For all $\bZ \subseteq N_\mathcal{C}(X) \setminus {Y}$ such that $|\bZ|=\ell$, PC iteratively tests $X \indep Y \cond \bZ$. If the conditional independence holds for a subset $\bZ$, the edge $\{X,Y\}$ is removed in $\cC$. 
After step $\ell = \Delta$,  the maximum degree in $\cG$, the process terminates. The last step of the algorithm consists of orienting the edges in $\cC$, leveraging the information acquired in the previous phase as well as applying so-called Meek rules (see \cite{meek1995causal}). If all CI tests outputs are correct, the final graph $\cC$ is the essential graph\footnote{The essential graph of $\cG$ represents the Markov equivalence class of $\cG$. Namely, it has the same skeleton and v-structures (see \cite{pearl2009causality}).} of $\cG$. Furthermore, recall that $m$ is the number of nodes in $\cG$.

\begin{theorem}[Sample complexity of PC]\label{th:PC} 
Assume that all CI tests involving $(X,Y,\bZ)$ with $X, Y \in \bX$ and  $\bZ \subseteq \bX, |\bZ| \leq \Delta$\footnote{ $\Delta$ is the maximum degree or an upper bound on the maximum degree in $\cG$.}, Assumptions \ref{ass:kernel} (on the kernel), \ref{ass:density} (on the joint), and \ref{ass:min_level} (on the minimum level of dependency) are satisfied. 
Let $\alpha >0$. PC using VM-CI tests with threshold $I_{min}/2$ recovers the MEC of $\cG$ with probability $\geq 1-\alpha$, as long as
$ n \geq \Omega\left(\left( \frac{\Delta+1}{I_{min}} \log (m/\alpha) \right)^2\right) \, . $
\end{theorem}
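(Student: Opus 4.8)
The plan is to reduce the analysis to the single-test guarantee of Corollary \ref{cor:rate_CItest} via a union bound, combined with the classical correctness of PC under a perfect CI oracle. First I would invoke the deterministic fact, already recorded in the description of the algorithm, that if every CI test invoked during the run of PC returns the correct answer, then the output graph $\cC$ is exactly the essential graph of $\cG$, i.e. PC recovers the MEC. Let $\mathcal{E}$ denote the event that all CI tests performed by PC are \emph{simultaneously} correct. On $\mathcal{E}$, PC succeeds, so it suffices to show that $\dP(\mathcal{E}^c) \leq \alpha$ whenever $n$ satisfies the stated lower bound.

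To control $\dP(\mathcal{E}^c)$ I would apply a union bound over all CI tests. Each test concerns a triple $(X,Y,\bZ)$ with $X,Y \in \bX$ and $\bZ \subseteq \bX \setminus \{X,Y\}$, $|\bZ| \leq \Delta$; under Assumptions \ref{ass:kernel}--\ref{ass:min_level} and the requirement $\beta > 1 + d_{\bZ}/2$ (which holds for every conditioning set, since $d_{\bZ} \leq \Delta$ forces only $\beta > 1 + \Delta/2$), Corollary \ref{cor:rate_CItest} bounds the probability that any fixed such test errs by $C_1 \exp(-c\, n^{1/2} I_{\min})$. A delicate point is that the constants $C_1, c$ coming from Corollary \ref{cor:conc_MI} depend on the dimension of the conditioning set; since this dimension ranges over the finite set $\{0,1,\dots,\Delta\}$, I take $C_1$ (resp. $c$) to be the largest (resp. smallest) such constant, which remains finite and positive, giving a bound uniform across all tests. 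It then remains to count the tests: there are at most $\binom{m}{2}$ pairs, and for each at most $\sum_{\ell=0}^{\Delta}\binom{m-2}{\ell} = O(m^{\Delta})$ conditioning sets, so the total number $N$ of tests satisfies $N = O(m^{\Delta+2})$.

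Combining these, $\dP(\mathcal{E}^c) \leq N \, C_1 \exp(-c\, n^{1/2} I_{\min})$, and requiring the right-hand side to be at most $\alpha$ and solving for $n$ gives
\begin{equation*}
    n \geq \frac{1}{c^2 I_{\min}^2}\, \log^2\!\left(\frac{C_1 N}{\alpha}\right).
\end{equation*}
The last step is to simplify the logarithm: since $N = O(m^{\Delta+2})$, we have $\log(C_1 N / \alpha) = O\big((\Delta+2)\log m + \log(1/\alpha)\big) = O\big((\Delta+1)\log(m/\alpha)\big)$, which yields the advertised bound $n \geq \Omega\big(((\Delta+1) I_{\min}^{-1}\log(m/\alpha))^2\big)$. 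I expect the main obstacle to be bookkeeping rather than any deep step: on the one hand, verifying that the dimension-dependent constants in Corollary \ref{cor:conc_MI} can be made uniform over conditioning sets of size up to $\Delta$ (this is where the smoothness requirement $\beta > 1 + \Delta/2$ becomes essential), and on the other hand, counting the tests tightly enough that $\log N$ contributes only the factor $(\Delta+1)$ rather than a larger power of $\Delta$.
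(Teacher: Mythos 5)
Your proposal is correct and follows essentially the same route as the paper's proof: a union bound over all possible CI tests combined with the single-test error rate of Corollary~\ref{cor:rate_CItest}, then solving the resulting exponential bound for $n$. Your test count $O(m^{\Delta+2})$ is slightly looser than the paper's $2\binom{m}{2}\sum_{i=0}^{\Delta-1}\binom{m-1}{i} = O(m^{\Delta+1})$, but this is immaterial after taking logarithms, and your explicit remark on making the constants of Corollary~\ref{cor:conc_MI} uniform over conditioning-set dimensions $0,\dots,\Delta$ is a point the paper leaves implicit.
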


\begin{remark}
    The sample complexity result of  Theorem \ref{th:PC} results from the exponential concentration derived in Theorem \ref{th:conc_entropy}. The previously known rate \citep{wassermann23course} $\dE[|\what{I}_{\mathrm{vm}}- I|] \leq Cn^{-1/2}$ and applying Markov's inequality yields the much looser\footnote{Note that the sample complexity of PC obtained when MINE \citep{belghazi2018mutual} is used to estimate the mutual information scales as $\Omega\left(\left( \frac{(m/2+\Delta+1)}{I_{min}} \log (m/\alpha) \right)^2\right)$, which already improves over Markov inequality but is still much looser than ours.} bound $n \geq \Omega\left(\left( \frac{m^{\Delta+1}}{\alpha I_{min}} \right)^2\right)$. 
\end{remark}
\subsection{Grow-shrink (GS) algorithm}
\begin{definition}[Markov boundary]
    The Markov boundary of a random variable  $X$ in set $\bX$, denoted by $\mathrm{MB}(X)$, is a minimal set $\bS \subseteq \bX \setminus \{X\} $ such that $X \indep \bX \setminus (\bS \cup \{X\}) \cond \bS$.
\end{definition}

The GS algorithm \citep{margaritis1999bayesian} first recovers the Markov boundary of each variable $X \in \bX$, as follows. Starting with $\mathrm{MB}(X)= \varnothing$,
\begin{itemize}
\vspace{-0.3cm}
    \item[1.] (Growing phase) While $\exists \, Y \in \bX \setminus \{X\}$ such that $Y \indep X \cond \mathrm{MB}(X)$, add $Y$ to $\mathrm{MB}(X)$,
    \item[2.] (Shrinking phase) While $\exists \, Y \in \mathrm{MB}(X)$ such that $Y \not\indep X \cond \mathrm{MB}(X) \setminus \{Y\}$, remove $Y$ from $\mathrm{MB}(X)$.
\end{itemize}
\vspace{-0.2cm}
Then, GS recovers the non-oriented graph structure. For every $X \in \bX$ and $Y \in \mathrm{MB}(X)$, a non-oriented edge $\{X,Y\}$ is added if
\begin{itemize}
\vspace{-0.3cm}
    \item[3.] for all $\bS \subseteq \mathbf{T}$, where $\mathbf{T}$ is the set with the smaller cardinality between $\mathrm{MB}(X) \setminus \{Y\}$ and $\mathrm{MB}(Y) \setminus \{X\}$, it holds that ${X} \not\indep {Y} \cond \bS$. 
\end{itemize}
\vspace{-0.2cm}
Finally, every edge $\{ X,Y\}$ is oriented $Y \to X$ if 
\begin{itemize}
\vspace{-0.3cm}
    \item[4.] $\exists \, Z \in N(X) \setminus (N(Y) \cup \{Y\}) $ such that for all $\bS \subseteq \mathbf{W}$, where $\mathbf{W}$ is the set with the smaller cardinality between $\mathrm{MB}(Y) \setminus \{X, Z\}$ and $\mathrm{MB}(Z) \setminus \{X, Y\}$, it holds that $Y \not\indep Z \cond {\mathbf{S} \cup \{X\}}$.  
\end{itemize}
The last step of GS is the same as in PC, namely, it applies the Meek rules.

\begin{theorem}[Sample complexity of GS]\label{th:GS}
Assume that $\max_{X\in \bX} |\mathrm{MB}(X)| \leq  \Gamma$ and that for all CI test involving $(X,Y,\bZ)$ with $X, Y \in \bX, \bZ \subseteq \bX, |\bZ| \leq \Gamma$, Assumptions \ref{ass:kernel} (on the kernel), \ref{ass:density} (on the joint), and \ref{ass:min_level} (on the minimum level of dependency) are satisfied.
Then, for GS using VM-CI tests with threshold $I_{min}/2$ recovers the MEC of $\cG$ with probability $\geq 1-\alpha$, as long as
$n \geq \Omega\left( \frac{1}{I_{min}}\log \left(\frac{m^2+m\Gamma^2 2^\Gamma}{\alpha} \right)^2\right) \, . $
\end{theorem}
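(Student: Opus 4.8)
The plan is to mirror the proof of Theorem \ref{th:PC}: control the overall failure probability of GS by a union bound over all the conditional independence tests it performs, using the per-test error rate of Corollary \ref{cor:rate_CItest}. The starting point is the correctness of GS under a perfect CI oracle \citep{margaritis1999bayesian}: if every query $X \indep Y \cond \bS$ is answered correctly, the growing and shrinking phases return the exact Markov boundary $\mathrm{MB}(X)$ of each $X$, step 3 recovers the skeleton, step 4 recovers the v-structures, and the Meek rules then yield the essential graph of $\cG$, i.e. its MEC. Therefore the event that GS outputs the correct MEC contains the event that all CI tests run by GS are simultaneously correct, and it suffices to lower bound the probability of the latter.

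Next I would bound the total number $N$ of CI tests. In the Markov-boundary recovery phase, each of the $m$ variables triggers $O(m)$ tests during growing and $O(\Gamma)$ during shrinking, contributing $O(m^2)$ in total. In step 3, for each of the $O(m\Gamma)$ pairs $(X,Y)$ with $Y \in \mathrm{MB}(X)$ we test all subsets $\bS$ of a set of cardinality at most $\Gamma$, i.e. $O(2^\Gamma)$ tests, giving $O(m\Gamma 2^\Gamma)$. In the orientation step 4, for each edge and each candidate $Z$ (at most $O(\Gamma)$ of them) we again enumerate $O(2^\Gamma)$ subsets, giving $O(m\Gamma^2 2^\Gamma)$. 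Summing, $N = O(m^2 + m\Gamma^2 2^\Gamma)$, where the exponential-in-$\Gamma$ terms come precisely from the subset enumeration in steps 3 and 4.

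By hypothesis every conditioning set has cardinality at most $\Gamma$, so each test falls under the assumptions of Corollary \ref{cor:rate_CItest}, and its total (type I plus type II) error is at most $C\exp(-c n^{1/2} I_{\min})$. A union bound then gives $\dP(\text{GS fails}) \leq N\, C \exp(-c n^{1/2} I_{\min})$. Requiring the right-hand side to be at most $\alpha$ and solving for $n$ yields $n \geq \tfrac{1}{c^2 I_{\min}^2}\log^2(NC/\alpha)$, and substituting $N = O(m^2 + m\Gamma^2 2^\Gamma)$ produces the stated sample-complexity bound.

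The main obstacle is the uniform control of the per-test error. Corollary \ref{cor:conc_MI} is dimension-free only up to constants $C_1, C_2$ that depend on the conditioning dimension $d_{\bZ}$ and on the smoothness $\beta$, so I must ensure that the smoothness requirement $\beta > 1 + d_{\bZ}/2$ holds for \emph{every} conditioning dimension encountered, i.e. $\beta > 1 + \Gamma/2$, and that these constants stay bounded over the finitely many dimensions $0 \leq d_{\bZ} \leq \Gamma$, so that a single pair $(c,C)$ governs all $N$ tests at once. Once this uniformity is secured, the counting and the union bound are routine.
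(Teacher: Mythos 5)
Your proposal takes essentially the same route as the paper's proof: count the CI tests performed by GS (namely $O(m^2)$ for the growing/shrinking phases, $O(m\Gamma 2^\Gamma)$ for step 3, and $O(m\Gamma^2 2^\Gamma)$ for step 4, totaling $N = O(m^2 + m\Gamma^2 2^\Gamma)$), apply the per-test error bound of Corollary \ref{cor:rate_CItest} with a union bound over all $N$ tests, and solve for $n$. Your conclusion $n \geq \Omega\bigl( I_{\min}^{-2}\log^2(N/\alpha)\bigr)$ is the correct consequence of this argument, and your closing remark about securing a single pair of constants uniformly over the finitely many conditioning dimensions $d_{\bZ} \leq \Gamma$ makes explicit a point the paper's proof leaves implicit.
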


\section{Numerical experiments}\label{sec:exp}
\vspace{-0.2cm}
\subsection{Experiments for single conditional independence test}
We compared VM-CI to other CI tests discussed in the related work,  including the KNN-based estimator \citep{poczos11a}, MINE \citep{belghazi2018mutual}, the HSIC-based CI test \footnote{Provided in \cite{kalainathan2020causal}.}\citep{zhang2012kernel}, the OT-based method \citep{akbari2023}, and the standard Gaussian partial correlation test.
\begin{figure*}[!ht]
    \centering
    \begin{subfigure}{0.8\textwidth}
        \includegraphics[trim=280 0 280 20, angle=270,width=\linewidth, clip]{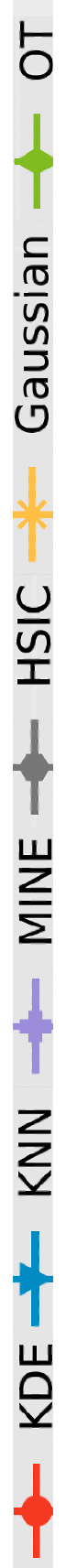}
        \vspace{-.15cm}
    \end{subfigure}
    \hfill
    \begin{subfigure}[t]{0.23\textwidth}
        \includegraphics[width=1.25\linewidth]{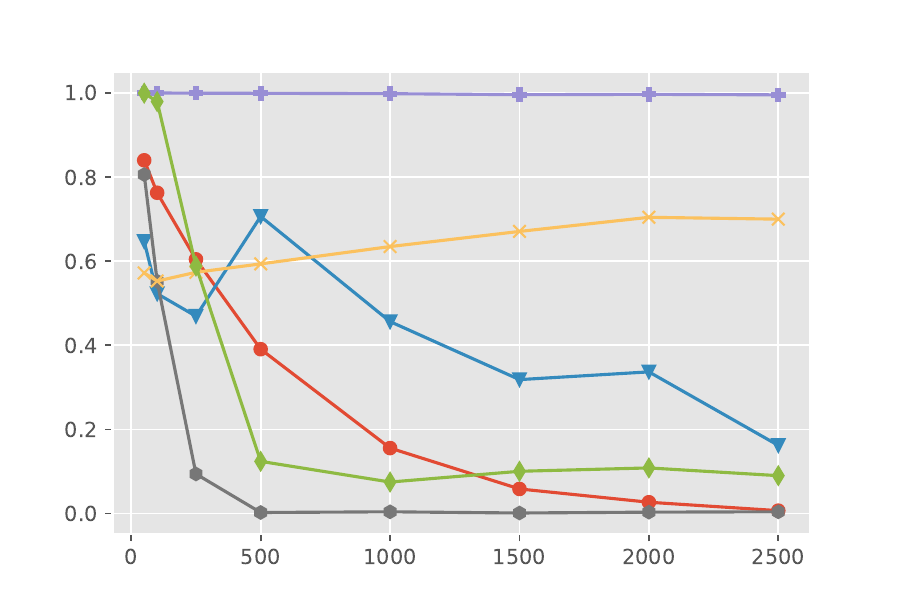}
        \caption{Type I+II errors for power-law data.}
        \label{fig:CI_test_power}
    \end{subfigure}
    \hspace{0.01\textwidth}
    \hfill
    \begin{subfigure}[t]{0.23\textwidth}
        \includegraphics[width=1.25\linewidth]{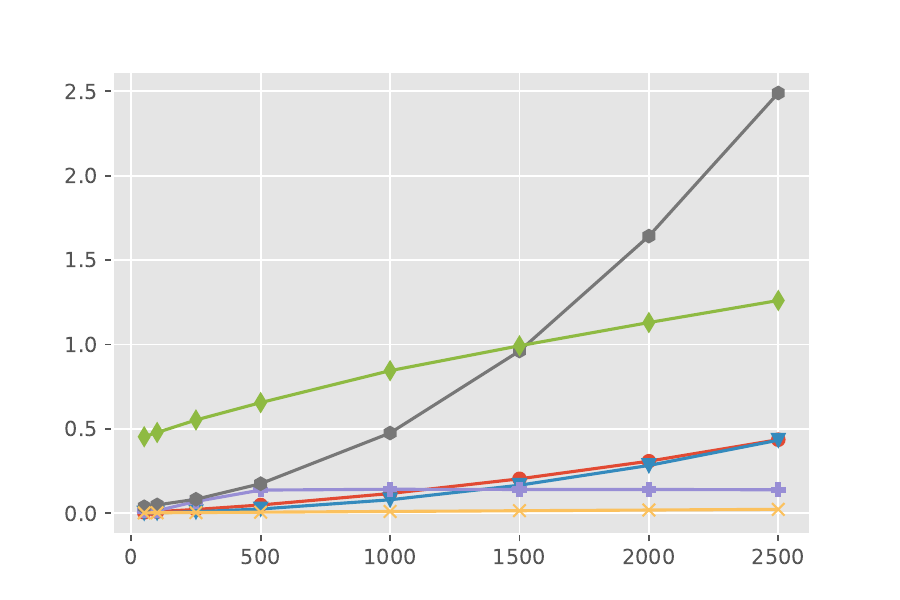}
        \caption{Time spent (s) for each CI test for power-law data.}
        \label{fig:nongaus_time}
    \end{subfigure}
    \hspace{0.01\textwidth}
    \hfill
    \begin{subfigure}[t]{0.23\textwidth}
        \includegraphics[width=1.25\linewidth]{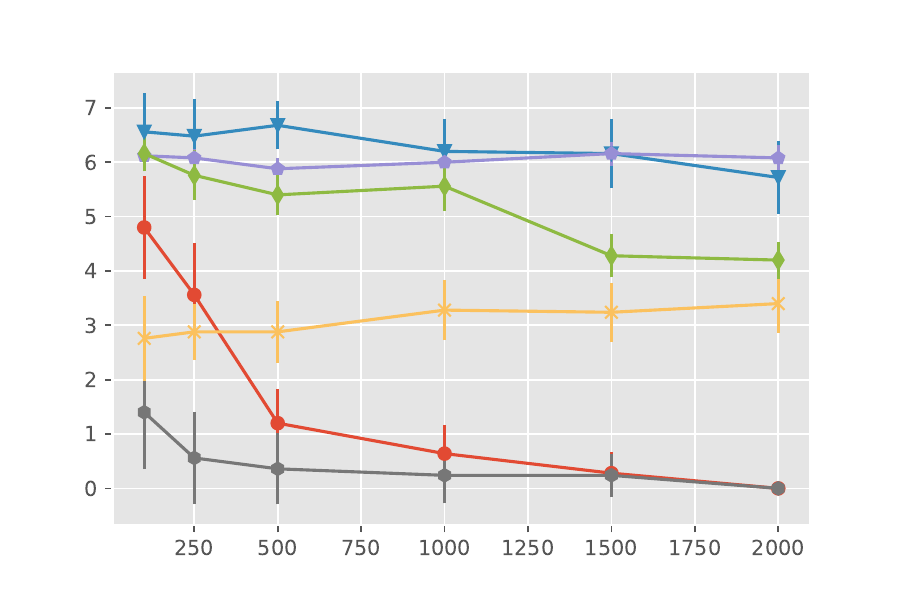}
        \caption{Overall loss of PC algorithm with various CI testers.}
        \label{fig:PC_loss}
    \end{subfigure}
    \hspace{0.01\textwidth}
    \hfill
    \begin{subfigure}[t]{0.23\textwidth}
        \includegraphics[width=1.25\linewidth]{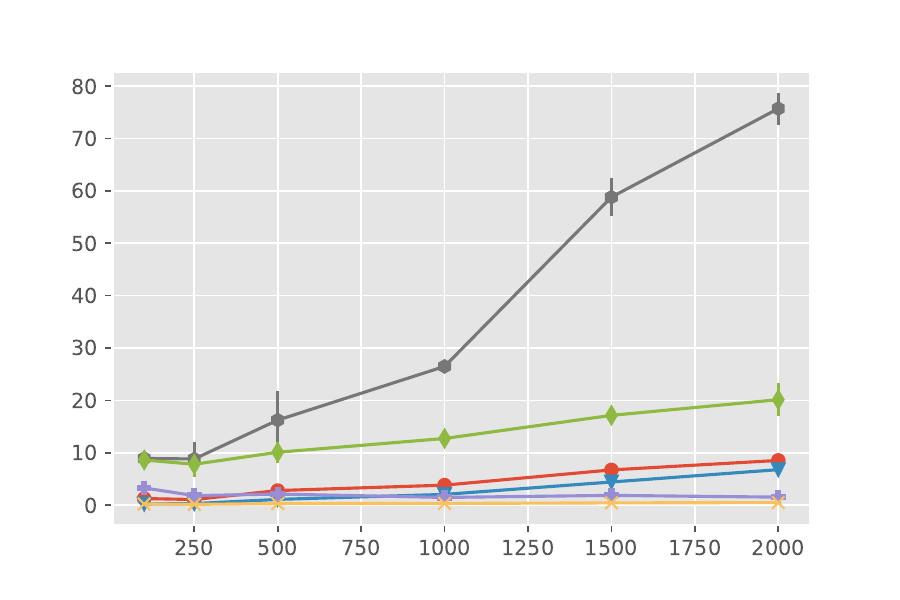}
        \caption{Runtime (s) of PC algorithm with various CI testers.}
        \label{fig:pc_time}
    \end{subfigure}
    \caption{Results of the numerical experiments (on the x-axis: number of samples $n$)}
    \label{fig:results}
\end{figure*}
We conducted the experiments using power-law distributed synthetic data. 
For each value of $n$, we ran $n_{\mathrm{exp}} = 5000$ experiments, the first half with $X \indep Y \cond \bZ$, the second half with $I(X;Y \cond \bZ) > I_{min}$.
The resulting estimated errors (sum of type I and type II errors) are depicted as a function of $n$ in Figures \ref{fig:CI_test_power}. In these figures, our method is denoted as ``KDE''.
More details regarding the generative models and parameters can be found in Appendix \ref{app:experiments}.

These results illustrate that, when dealing with non-Gaussian data, VM-CI outperforms most other methods in terms of type I and type II errors, with the exception of the HSIC-based method \citep{zhang2012kernel}.  However, VM-CI competes favorably with HSIC when the sample size $n$ exceeds $1500$, and it is significantly faster, as demonstrated in Figure \ref{fig:nongaus_time}. Among the methods we explored, the OT-based approach of \citep{akbari2023} comes close to that of VM-CI, although its performance and time complexity fall slightly short of VM-CI.
\vspace{-0.2cm}
\subsection{Experiments for causal discovery algorithms}\label{sec: exp disc}
In addition to CI tests, we ran experiments to assess the performance of PC using VM-CI vs. PC using other CI tests. These experiments were performed on non-Gaussian synthetic data generated according to a Structural Equation Model (SEM) with the causal graph depicted in Figure \ref{fig:true_causal_graph}.

\begin{figure}[ht]
	\centering
	\begin{tikzpicture}[scale=1, every node/.style={scale=0.8}]
	   \tikzset{vertex/.style = {shape=circle,draw,minimum size=1em}}
	   \tikzset{edge/.style = {->,very thick,> = latex,sibling distance=20mm}}
		\node[vertex] (a) at (0,0) {$1$};
		\node[vertex] (b) [right  = 1.05cm of a] {$2$};
		\node[vertex] (c) [below left = 0.8cm of b] {$3$};
		\node[vertex] (e) [below left = 0.8cm of a] {$5$};
		\node[vertex] (d) [above left = 0.8cm of e] {$4$};
		\node[vertex] (f) [below left = 0.8cm of d] {$6$};
		\draw[edge] (a) to (c);
		\draw[edge] (b) to (c);
		\draw[edge] (a) to (e);
		\draw[edge] (d) to (e);
		\draw[edge] (d) to (f);
		\draw[edge] (e) to (f);
	\end{tikzpicture}
\caption{Underlying causal graph in the experiments.}
\label{fig:true_causal_graph}
\end{figure}
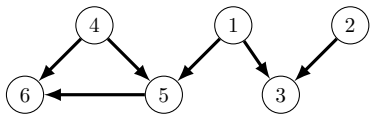
For each value of $n$, we conducted $25$ experiments and depicted the overall loss in Figures \ref{fig:PC_loss}. The overall loss is defined as the total number of missing, extra, and misoriented edges in the resulting graph. Additional details regarding the SEM and its parameters, along with similar experiments for GS, can be found in Appendix \ref{app:experiments}.


VM-CI outperforms all methods except HSIC. Here again, VM-CI is competitive with HSIC when $n \geq 1500$, and it significantly outpaces HSIC in terms of computational efficiency. 
It is noteworthy that, despite being an efficient CI test, the OT-based method exhibits poor performance in our example.
This may be attributed to several factors: $(i)$ the absence of theoretical guarantees such as consistency for the OT method; $(ii)$ the lack of robustness of PC/GS to errors in CI tests\footnote{Indeed, a single error in one of the many CI tests in PC/GS can lead to a drastically different graph.}; and $(iii)$ the strong dependence of the performance of this method on the dimension. The remaining results are consistent with the performance of the CI tests in Figure \ref{fig:CI_test_power}.

To conclude, we emphasize that among the reviewed methods, those that compete favorably with VM-CI either suffer from the lack of theoretical guarantees or have a prohibitive time 
 complexity, or both.

\vspace{-0.2cm}
\section{Conclusion}
\vspace{-0.2cm}
We established an exponential concentration inequality for the nonparametric Von Mises estimator.
Using this estimator, we designed VM-CI to test conditional independence. This test achieves optimal parametric rates under smoothness assumptions and provides a tight upper bound for the error.
This further allowed us to compute the sample complexity of causal discovery algorithms using VM-CI, the first such guarantee for continuous variables.
Our empirical findings show that VM-CI overall outperforms other popular conditional independence tests in terms of time, sample complexity, or both.
The methods that are competitive against VM-CI require excessive time complexity or suffer from a lack of theoretical guarantees.
\bibliographystyle{apalike}
\bibliography{arxiversion}

\clearpage
\appendix
\onecolumn
\section{Proofs}\label{app:proofs}
\subsection{Proof of Theorem \ref{thm:conc_sup}}
\begin{proof}[Proof of Theorem \ref{thm:conc_sup}]
    To prove our result, we use the standard bias-variance decomposition:
    \begin{equation}
        |p(\bx)-\what{p}_h(\bx)| \leq \underbrace{|p(\bx)-p_h(\bx)|}_{\mbox{bias}} + \underbrace{|p_h(\bx)-\what{p}_h(\bx)|}_{\mbox{variance}} ,
    \end{equation} where we recall that $p_h := \dE[\what{p}_h]$. We bound the bias and variance terms separately.

    \underline{\textit{Bounding the bias.}}
    \begin{flalign}\label{eq:proof:thm:conc_sup1}
        p(\bx)-p_h(\bx) & = p(\bx)- \frac{2}{n} \sum_{i=1}^{n/2} \int  \frac{1}{h^d} K_d \left( \frac{\bx^{(i)} - \bx}{h}\right)  p(\bx^{(i)}) \, \bdx^{(i)} \nonumber \\
        & = p(\bx)- \int  \frac{1}{h^d} K_d \left( \frac{\bx' - \bx}{h}\right)  p(\bx') \, \bdx' \nonumber \\
        & \overset{(a)}{=} \int K_d(\by) (p(\bx)-   p(\bx+h\by)) \, \bdy, 
    \end{flalign} where  $(a)$ results from change of variable $\by = (\bx'-\bx)/h$. We now take advantage of the fact that functions in $\Sigma(\beta,L)$ are well approximated by their Taylor expansions. Namely, we have the following classical result:
        
    \begin{lemma}\label{lem:holder_taylor}
    If $g \in \Sigma(\beta,L)$ on $\cX \subseteq \dR^d$, then for all $\ba,\bx \in \cX$,
    \begin{equation}\label{eq:lem:holder_taylor}
    \left| g(\bx) - g_{\beta-1,\ba}(\bx)\right| \leq L\frac{\| \bx-\ba\|_1^\beta}{\beta!} \, .
    \end{equation}
    \end{lemma}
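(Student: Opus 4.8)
The plan is to reduce this multivariate Taylor remainder estimate to the familiar one-dimensional Taylor theorem by restricting $g$ to the segment joining $\ba$ and $\bx$. Concretely, I would define $\phi(t) := g(\ba + t(\bx - \ba))$ for $t \in [0,1]$, which is well-defined and $\beta$ times differentiable provided the segment stays in $\cX$ (guaranteed when $\cX$ is convex, which I assume here). Differentiating through the linear inner map by the iterated chain rule yields the multi-index formula
\begin{equation*}
\phi^{(k)}(t) = \sum_{\bs : |\bs| = k} \frac{k!}{\bs!}\, D^{\bs} g(\ba + t(\bx-\ba))\, (\bx-\ba)^{\bs},
\end{equation*}
which is just the multinomial expansion of the $k$-th directional derivative.

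Next I would invoke the one-dimensional Taylor theorem with Lagrange remainder: there exists $\xi \in (0,1)$ with
\begin{equation*}
\phi(1) = \sum_{k=0}^{\beta-1} \frac{\phi^{(k)}(0)}{k!} + \frac{\phi^{(\beta)}(\xi)}{\beta!}.
\end{equation*}
Since $\phi(1) = g(\bx)$, and substituting the derivative formula at $t=0$ into the finite sum reproduces
\begin{equation*}
\sum_{k=0}^{\beta-1} \frac{1}{k!}\sum_{|\bs|=k} \frac{k!}{\bs!} D^{\bs}g(\ba)(\bx-\ba)^{\bs} = \sum_{|\bs| \leq \beta-1} \frac{D^{\bs}g(\ba)}{\bs!}(\bx-\ba)^{\bs} = g_{\beta-1,\ba}(\bx),
\end{equation*}
the difference $g(\bx) - g_{\beta-1,\ba}(\bx)$ collapses to the single remainder term $\phi^{(\beta)}(\xi)/\beta!$.

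To finish I would bound this remainder. Writing it out and using the uniform bound $|D^{\bs}g| \leq L$ from the definition of $\Sigma(\beta,L)$ together with the triangle inequality,
\begin{equation*}
\left| g(\bx) - g_{\beta-1,\ba}(\bx)\right| \leq \frac{1}{\beta!} \sum_{|\bs|=\beta} \frac{\beta!}{\bs!}\, \bigl|D^{\bs} g(\ba + \xi(\bx-\ba))\bigr|\, \prod_{j=1}^{d} |x_j - a_j|^{s_j} \leq \frac{L}{\beta!} \sum_{|\bs|=\beta} \frac{\beta!}{\bs!} \prod_{j=1}^{d} |x_j - a_j|^{s_j},
\end{equation*}
and the multinomial theorem identifies the last sum with $\bigl(\sum_{j=1}^d |x_j - a_j|\bigr)^{\beta} = \| \bx - \ba \|_1^{\beta}$, which is exactly the claimed bound $L\|\bx-\ba\|_1^{\beta}/\beta!$.

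The steps are conceptually routine, so the only genuinely delicate points are bookkeeping. The main one is matching the finite Taylor sum $\sum_{k=0}^{\beta-1}\phi^{(k)}(0)/k!$ with $g_{\beta-1,\ba}(\bx)$: this relies on the factor $k!/\bs!$ produced by the chain rule cancelling the $1/k!$ weight so that each term is precisely $D^{\bs}g(\ba)(\bx-\ba)^{\bs}/\bs!$ as in the definition of $g_{\beta-1,\ba}$. A second mild technicality is ensuring the segment $\{\ba + t(\bx-\ba) : t \in [0,1]\}$ remains in $\cX$, so that $\phi$ and its derivatives are defined throughout and the uniform Hölder bound applies; this is automatic if $\cX$ is convex, and otherwise the estimate is understood along such admissible segments.
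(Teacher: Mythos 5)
Your proof is correct and takes essentially the same route as the paper: the paper's proof invokes the multivariate Taylor theorem with Lagrange remainder at order $\beta-1$ and then concludes via the uniform bound $|D^{\bs}g| \leq L$ and the multinomial theorem, exactly as you do. The only difference is that you additionally derive that Taylor step from the one-dimensional theorem by restricting $g$ to the segment $t \mapsto g(\ba + t(\bx-\ba))$, and in doing so you rightly flag the convexity (or star-shapedness) of $\cX$ needed for the segment to stay in the domain, a point the paper's proof leaves implicit.
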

    
    \begin{proof}[Proof of Lemma \ref{lem:holder_taylor}]
        We apply Taylor's theorem at the order $\beta - 1$. There exists $c \in [0,1]$ such that 
        \begin{equation*}
            g(\bx) = \sum_{|s| \leq \beta-1} \frac{D^{\bs}g(\ba)}{{\bs}!}(\bx-\ba)^{\bs} + \sum_{|s| = \beta} \frac{D^{\bs}g(\ba + c(\bx - \ba))}{{\bs}!}(\bx-\ba)^{\bs} 
        \end{equation*} Hence
        \begin{equation*}
            \left| g(\bx) - g_{\beta-1,\ba}(\bx)\right| \leq \sum_{|s| = \beta} L \frac{|\bx-\ba|^{\bs}}{{\bs}!} = L\frac{\| \bx-\ba\|_1^\beta}{\beta!},
        \end{equation*} by the multinomial theorem.
    \end{proof}

    With Lemma \ref{lem:holder_taylor} in mind, \eqref{eq:proof:thm:conc_sup1} becomes
    \begin{flalign}\label{eq:proof:thm:conc_sup2}
        |p(\bx)-p_h(\bx)| & \leq \left| \int K_d(\by) (p(\bx)-   p_{\beta-1,\bx}(\bx+h\by)) \, \bdy \right| \nonumber \\ 
        & \quad \quad \quad +   \int | K_d(\by) (p(\bx+h\by)-   p_{\beta-1,\bx}(\bx+h\by)) | \, \bdy 
    \end{flalign}
    Note that $p(\bx)-   p_{\beta-1,\bx}(\bx+h\by)$ is a polynomial in $\by$, of degree $\leq \beta$, and with no constant term. Since $K_d$ is of order $\beta$, the first term of the RHS of \eqref{eq:proof:thm:conc_sup2} evaluates to $0$. This gives in turn, applying Lemma \ref{lem:holder_taylor}, 
    \begin{equation}\label{eq:proof:thm:conc_sup3}
        |p(\bx)-p_h(\bx)| \leq L h^{\beta} \int | K_d(\by) | \|\by\|_1^{\beta} \, \bdy \leq C h^{\beta} \, ,
    \end{equation} for some constant $C>0$, since $\by \mapsto | K_d(\by) | \|\by\|_1^{\beta}$ is integrable by assumption. Note that the bound \eqref{eq:proof:thm:conc_sup3} is uniform in $\bx \in \cX$.

    \underline{\textit{Bounding the variance.}}
    The variance satisfies an exponential concentration property, thanks to Assumption $(1c)$. We leverage on a result from \cite{rinaldo2010}, obtained by applying some previously established results from \cite{gine02}.

    \begin{proposition}[Proposition 9 in \cite{rinaldo2010}]\label{prop:rinaldo}
        Assume that $K_d$ satisfies $(1a)$ and $(1c)$. Then, for any $D_1>0$ there exists constants $D_2, D_3, \eps_0 >0$ and $n_0 >0$ such that, if $h_n \to 0$, $\frac{h_n^d}{|\log h_n|} \to 0$, and $D_1 \sqrt{\frac{|\log h_n|}{n h_n^d}} \leq \eps_n \leq \eps_0$, then
        \begin{equation}\label{eq:prop:rinaldo}
            \dP\left(\sup_{\bx \in \cX} |p_h(\bx)-\what{p}_h(\bx)| > \eps_n \right) \leq D_2 \exp\left( - D_3 n h_n^d \eps_n^2  \right)
        \end{equation}
    \end{proposition}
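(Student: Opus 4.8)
The plan is to read $\sup_{\bx\in\cX}|p_h(\bx)-\what{p}_h(\bx)|$ as (a rescaling of) the supremum of a centered empirical process indexed by a VC-type class, and then apply a Talagrand/Bousquet concentration inequality once its expectation has been controlled through the entropy bound of Assumption $(1c)$. Write $m=n/2$ for the number of samples used to build $\what{p}_h$, and for each $\bx\in\cX$ set $f_{\bx} := K_d((\,\cdot\,-\bx)/h)$. With $P_m$ the empirical measure of $(\bx^{(i)})_{1\le i\le m}$ and $P$ the law of $\bX$, one has
\begin{equation*}
    \sup_{\bx\in\cX}|p_h(\bx)-\what{p}_h(\bx)| \;=\; \frac{1}{h^d}\,\sup_{f\in\cF_h}\Big| (P_m-P)f \Big| \;=:\; \frac{1}{h^d}\,\|P_m-P\|_{\cF_h},
\end{equation*}
where $\cF_h:=\{f_{\bx}:\bx\in\cX\}\subseteq\cF$. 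Since $\cF_h\subseteq\cF$, Assumption $(1c)$ transfers verbatim: $\cF_h$ is VC-type with the same constants $A,v$, and by $(1a)$ its envelope is bounded by $\kappa$. The class is pointwise measurable (image admissible Suslin under $(1c)$), so the supremum may be restricted to a countable dense subfamily and all measurability issues disappear. Consequently $\dP(\sup_{\bx}|p_h-\what{p}_h|>\eps_n)=\dP(\|P_m-P\|_{\cF_h}>h^d\eps_n)$, and it suffices to prove an exponential bound for the latter.

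Two quantities drive the inequality. First, the \emph{weak variance} $\sigma^2:=\sup_{f\in\cF_h}\mathrm{Var}(f(\bX))\le\sup_{\bx}\dE[K_d((\bX-\bx)/h)^2]$; the change of variables $\by=(\,\cdot\,-\bx)/h$ gives $\dE[K_d((\bX-\bx)/h)^2]=h^d\int K_d(\by)^2\,p(\bx+h\by)\,\bdy\le h^d\|p\|_\infty\int K_d(\by)^2\,\bdy$, so that $\sigma^2\le C_\sigma h^d$ (here we use that $p$ is bounded, which holds in the ambient setting since $p$ is Hölder on the compact $\cX$). Second, the \emph{expected supremum}, which we bound via Dudley's entropy integral fed by the VC covering of $(1c)$; the standard moment inequality for VC-type classes (Gin\'e and Guillou \cite{gine02}, van der Vaart--Wellner \cite{vandervaart1996}) yields
\begin{equation*}
    \dE\,\|P_m-P\|_{\cF_h} \;\le\; C\left(\sqrt{\frac{v\,\sigma^2\,\log(A\kappa/\sigma)}{m}} \;+\; \frac{v\,\kappa\,\log(A\kappa/\sigma)}{m}\right).
\end{equation*}
As $d$ is fixed and $\sigma^2\asymp h^d$, we have $\log(A\kappa/\sigma)\asymp|\log h|$; and in the regime $n h^d\gtrsim|\log h|$ (guaranteed once $\eps_n\ge D_1\sqrt{|\log h|/(nh^d)}$ and $\eps_n\le\eps_0$) the first term dominates, giving $\dE\|P_m-P\|_{\cF_h}\le C'\sqrt{h^d|\log h|/m}$.

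Finally I would apply Bousquet's form of Talagrand's inequality to the empirical process of functions bounded by $\kappa$: for $t>0$,
\begin{equation*}
    \dP\big(\|P_m-P\|_{\cF_h}\ge \dE\|P_m-P\|_{\cF_h}+t\big) \;\le\; \exp\!\left(-\,\frac{m\,t^2}{2\sigma^2+4\kappa\,\dE\|P_m-P\|_{\cF_h}+\tfrac{2}{3}\kappa t}\right).
\end{equation*}
Take the deviation level $h^d\eps_n$. Choosing $D_1$ large enough forces $h^d\eps_n\ge 2\,\dE\|P_m-P\|_{\cF_h}$, so with $t:=h^d\eps_n-\dE\|P_m-P\|_{\cF_h}\ge\tfrac12 h^d\eps_n$ the event of interest is covered. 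Each term of the denominator is $O(h^d)$: $\sigma^2\lesssim h^d$, $\kappa t\le\kappa h^d\eps_0\lesssim h^d$, and $\kappa\,\dE\|P_m-P\|_{\cF_h}\lesssim\sqrt{h^d|\log h|/m}\lesssim h^d$, the last again by $m h^d\gtrsim|\log h|$. Hence the exponent is at most $-c\,m(h^d\eps_n)^2/h^d=-c\,m h^d\eps_n^2$, and since $m=n/2$ this is the claimed bound with suitable $D_2,D_3$. The step I expect to be delicate is the expected-supremum estimate: one must extract the \emph{local}, variance-dependent rate $\sqrt{\sigma^2\log(1/\sigma)/m}$ (not merely $\sqrt{\log(1/\sigma)/m}$) from the VC entropy, because it is exactly the factor $\sigma^2\asymp h^d$ that renders the Talagrand denominator of order $h^d$ and produces the correct exponent $n h^d\eps_n^2$; this is the heart of the Gin\'e--Guillou machinery, and it must be coupled with the condition $n h^d\gtrsim|\log h|$ so that the linear-in-$1/m$ remainder never dominates.
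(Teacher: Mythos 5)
Your proposal is correct and follows essentially the same route as the source: the paper does not prove this statement itself but imports it verbatim as Proposition 9 of \cite{rinaldo2010}, whose proof rests exactly on the Gin\'e--Guillou machinery you reconstruct --- rescaling to an empirical process over the VC-type class of Assumption $(1c)$, the weak-variance bound $\sigma^2 \lesssim h^d$, the variance-sensitive expected-supremum estimate $\sqrt{\sigma^2|\log h|/m}$, and Talagrand/Bousquet concentration, with the condition $n h^d \gtrsim |\log h|$ keeping the linear remainder subdominant. Your closing remark correctly identifies the crux (extracting the local rate with the factor $\sigma^2 \asymp h^d$, not merely the entropy rate), so nothing is missing relative to the cited argument.
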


    \underline{\textit{Completing the proof.}}
    Taking $ h_n = \Theta(n^{-\frac{1}{2\beta + d}})$ (minimizing the MISE $\Theta(h^{2\beta})+ \Theta(\frac{1}{nh^d})$), $D_1=1$ and $ n^{-\frac{\beta}{2\beta + d}} (\log n)^{1/2} \leq \eps_n/2 \leq \eps_0$ ensures that Proposition \ref{prop:rinaldo} applies for $\eps_n/2$. 
    Applying \eqref{eq:proof:thm:conc_sup3} to $h=h_n$ gives that almost surely $\|p-p_h\|_{\infty} = O(n^{-\frac{\beta}{2\beta + d}}) < \eps_n/2 $ for $n$ large enough.
    Now,  for $n$ large enough, 
    \begin{flalign*}\label{eq:proof:thm:conc_sup4}
        \dP(\|p-\what{p}_h\|_{\infty} > \eps_n) & \leq \dP(\|p-p_h\|_{\infty} > \eps_n/2) +  \dP(\|p_h-\what{p}_h\|_{\infty} > \eps_n/2) \nonumber \\
        & \leq 0 + D_2 \exp\left( - C_2 n^{-\frac{2\beta}{2\beta + d}} \eps_n^2  \right),
    \end{flalign*} which ends the proof of Theorem \ref{thm:conc_sup}.
\end{proof}

\subsection{Proof of Theorem \ref{th:conc_entropy}}
\begin{proof}[Proof of Theorem \ref{th:conc_entropy}]
    As stated in \eqref{eq:dvlp_von_mises}, the first step is to rigorously justify the Von Mises expansion. Note that since $(-y\log y)' = -\log y -1 $ and $(-y\log y)'' = - 1/y$, then for a given $\bx \in \cX$,
    \begin{equation}\label{eq:proof:th:conc_entropy1}
        |-p(\bx)\log p(\bx) + \what{p}_h(\bx)\log \what{p}_h(\bx) + (\log \what{p}_h(\bx)+1)(p(\bx)-\what{p}_h(\bx)) | \leq \left(\sup_{x \in \cX}\frac{1}{|\what{p}_h(\bx)|}\right) (p(\bx)-\what{p}_h(\bx))^2 ,
    \end{equation} and since $p$ is lower bounded by $p_{\min} >0$, then by Remark \ref{rem:positivity_ph}, for $n$ large enough, $\sup_{x \in \cX}\frac{1}{|\what{p}_h(\bx)|} \leq 2/p_{\min}$ and we can integrate of \eqref{eq:proof:th:conc_entropy1} over $\cX$ to indeed get
    \begin{flalign}\label{eq:proof:th:conc_entropy2}
        H(p) & = H(\what{p}_h) - \int_{\cX}(\log \what{p}_h(\bx) + 1)(p(\bx)-\what{p}_h(\bx))  \, \bdx + O\left( \int_{\cX} (p(\bx)-\what{p}_h(\bx))^2  \, \bdx \right) \nonumber \\
        & = - \int_{\cX} p(\bx) \log \what{p}_h(\bx) \, \bdx + O\left( \int_{\cX} (p(\bx)-\what{p}_h(\bx))^2 \, \bdx \right) \, .
    \end{flalign} This in turn implies that 
    \begin{flalign}\label{eq:proof:th:conc_entropy3}
        \what{H}_{\mathrm{vm}} - H(p) & = - \frac{2}{n} \sum_{i=n/2+1}^{n} \log \what{p}_h(\bx^{(i)}) + \int_{\cX} p(\bx) \log \what{p}_h(\bx) \, \bdx + O\left( \int_{\cX} (p(\bx)-\what{p}_h(\bx))^2 \, \bdx \right) \, .
    \end{flalign}
    The first two terms are the difference between an empirical mean and its expectation w.r.t $p$. 
    Recall that $n$ is large enough so that $\| p -\what{p}_h\|_{\infty} < p_{\min}/2$ (Remark \ref{rem:positivity_ph}). Hence, since $p$ is bounded on the compact set $\cX$, so is $\what{p}_h$. Every term in the sum $\sum_{i=n/2+1}^{n} \frac{2}{n} \log \what{p}_h(\bx^{(i)})$ is almost surely bounded by $c/n$ where $c>0$ is a constant. Azuma-Hoeffding inequality yields
    \begin{flalign}\label{eq:proof:th:conc_entropy4}
        \dP\left(\left|- \frac{2}{n} \sum_{i=n/2+1}^{n} \log \what{p}_h(\bx^{(i)}) + \int_{\cX} p(\bx) \log \what{p}_h(\bx) \, \bdx \right| > \eps_n/2 \right) & \leq 2\exp\left(- \frac{\eps_n^2}{8 \sum_{i=n/2+1}^{n} (c/n)^2}\right) \nonumber \\
        & =  2\exp\left(- \frac{\eps_n^2 n}{4c}\right) \nonumber \\
        & \leq C'_1 \exp\left(- C'_2 n^{1/2}\eps_n \right),
    \end{flalign} since $n^{1/2}\eps_n >1$ by assumption.
    The second part of the result comes from the inequality 
    \begin{flalign}\label{eq:proof:th:conc_entropy5}
        \dP\left( \int_{\cX} (p(\bx)-\what{p}_h(\bx))^2 \, \bdx > t \right) & \leq \dP\left( \sup_{\bx \in \cX} |p(\bx)-\what{p}_h(\bx)|  > \sqrt{t}/\mathrm{Vol(\cX)} \right), 
    \end{flalign} and appealing to Theorem \ref{thm:conc_sup} with a deviation $\sqrt{\eps_n}/C_4$ where $C_4>0$ is some constant (depending on $p_{\min}$ and $\mathrm{Vol(\cX)}$).
\end{proof}

\subsection{Proof of Corollary \ref{cor:conc_MI}}
\begin{proof}[Proof of Corollary \ref{cor:conc_MI}]
    The proof of Corollary \ref{cor:conc_MI} is a straightforward application of Theorem \ref{th:conc_entropy} to  $\what{H}_{\mathrm{vm}}(X,Y,\bZ)$, $\what{H}_{\mathrm{vm}}(X,\bZ)$, $\what{H}_{\mathrm{vm}}(Y,\bZ)$ and $\what{H}_{\mathrm{vm}}(\bZ)$.  The dimension-free rate comes from the assumption $\beta > 1 + d_{\bZ}/2$, which implies that $\frac{2 \beta}{2 \beta + (2+d_{\bZ})}$, $\frac{2 \beta}{2 \beta + (1+d_{\bZ})}$ and $\frac{2 \beta}{2 \beta + d_{\bZ}}$ are always larger than $1/2$.
\end{proof}

\subsection{Proof of Corollary \ref{cor:rate_CItest}}
\begin{proof}[Proof of Corollary \ref{cor:rate_CItest}]
    Let $I := I(X;Y \cond \bZ)$. The sum of type one and type two errors of $T$ is easily bounded for $n$ large enough by applying Corollary \ref{cor:conc_MI} as follows.  
    \begin{flalign*}
        \dP(\mbox{reject $H_0$} \cond H_0) + \dP(\mbox{accept $H_0$} \cond H_1) &  \leq \dP(|\what{I}_{\mathrm{vm}} - I| > I_{\min} /2) + \dP(|\what{I}_{\mathrm{vm}} - I| > I_{\min} /2) \\
        & \leq 2 C_1 \exp\left( - C_2 n^{1/2} I_{\min}/2 \right) \, .
    \end{flalign*} 
Finding $n$ such that the RHS of the above is less than $\alpha$ concludes the proof.
\end{proof}

\subsection{Proof of Theorem \ref{th:PC}}

\begin{proof}[Proof of Theorem \ref{th:PC}]

By definition, the number of CI tests required by this algorithm to recover the MEC is upper bounded by $2 \binom{m}{2} \sum_{i=0}^{\Delta-1} \binom{m-1}{i} = O(m^{\Delta+1})$.
Using Corollary \ref{cor:rate_CItest} and the union bound, the probability that at least one of the outputs of these CI tests is incorrect is less than:
$$
C_1 m^{\Delta+1} \exp\left( - C_2 n^{1/2} I_{\min}/2 \right) \,.
$$
Finding $n$ such that the RHS of the above is less than $\alpha$ gives 
$n \geq \Omega\left(\left( \frac{\Delta+1}{I_{min}} \log (m/\alpha) \right)^2\right)$ and concludes the proof.
\end{proof}

\subsection{Proof of Theorem \ref{th:GS}}
\begin{proof}[Proof of Theorem \ref{th:GS}]
Steps 1-2 conduct $O(m)$ CI tests in the worst case, hence $O(m^2)$ CI tests are needed to recover all Markov boundaries. Recall $\max_{X\in \bX} |\mathrm{MB}(X)| \leq \Gamma$. Then Step 3 needs $O(m \Gamma 2^\Gamma)$ CI tests. Finally, Step 4 performs $O(m \Gamma^2 2^\Gamma)$ tests at the worst case.
The rest of the steps of the algorithm do not require CI tests.
Therefore, GS requires $O(m^2+m \Gamma^2 2^\Gamma)$ number of CI tests.

Using Corollary \ref{cor:rate_CItest} and the union bound, the probability that at least one of the outputs of these CI tests is incorrect is less than:
$$
(m^2 + m \Gamma^2 2^\Gamma) m^{\Delta+1} \exp\left( - C_2 n^{1/2} I_{\min}/2 \right) \,.
$$
Finding $n$ such that the RHS of the above is less than $\alpha$ gives 
$n \geq \Omega\left( \frac{1}{I_{min}}\log \left(\frac{m^2+m\Gamma^2 2^\Gamma}{\alpha} \right)^2\right)$ and concludes the proof.
\end{proof}

\section{Further on numerical experiments}\label{app:experiments}

\subsection{Single conditional independence test}

\paragraph{Model} In our tests, $X$ and $Y$ are one dimensional and $\bZ=(Z_1, Z_2)$ is two dimensional. 
%
%
$X,Y,Z_1,Z_2$ are distributed on $[0,1]$ with same marginal distributions $p_{\beta}(x)=(\beta + 1.15) x^{\beta+0.15}\mathbf{1}_{[0,1]}(x)$ for some positive integer $\beta$. Note that this distribution -- often referred to as \emph{power law distribution} -- is $\beta-$Hölder smooth (see Definition \ref{def:holder}). Next, we denote by $\cU([0,1])$ the uniform law on $[0,1]$. We generate the data via inverse transform sampling as follows:
\begin{flalign*}
    U_{Z,1} & \sim \cU([0,1])\\
    U_{Z,2} & \sim \cU([0,1])\\
    U_{X} \cond ( U_{Z,1}, U_{Z,2})& \sim t_1 \delta_{U_{Z,1}} + t_2 \delta_{U_{Z,2}} + (1-t_1-t_2) \, \cU([0,1])\\
    U_{Y} \cond ( U_{Z,1}, U_{Z,2}, U_X) & \sim t_1 \delta_{U_{Z,1}} + t_2 \delta_{U_{Z,2}} + t_{xy} \delta_{U_{X}} + (1-t_1-t_2-t_{xy}) \, \cU([0,1]),
\end{flalign*} where $t_1, t_2, t_{xy}$ are non-negative real numbers such that $t_1 + t_2 + t_{xy} <1$. Then $X,Y,Z_1$ and $Z_2$ are obtained as follows: $X = (U_{X})^{\frac{1}{\beta + 1.15}}, Y = (U_{Y})^{\frac{1}{\beta + 1.15}}, Z_1 = (U_{Z,1})^{\frac{1}{\beta + 1.15}}$, and $Z_2 = (U_{Z,2})^{\frac{1}{\beta + 1.15}}$. Note that it suffices to take $t_{xy}=0$ to get conditional independence of $X$ and $Y$ given $\bZ$. In the case where $X \indep Y \cond \bZ$ we took $\beta = 3$ and $(t_1,t_2,t_{xy})=(0.2,0.2,0)$. For $X \not \indep Y \cond \bZ$ we took $\beta = 3$ and $(t_1,t_2,t_{xy})=(0.2,0.1,0.3)$ and $I_{\min} = 0.11$.

\paragraph{Parameters}
We present in Table \ref{tab:paras_CI} the parameters used for numerical experiments on CI tests.
\begin{table}[h]
    \centering
    \begin{tabular}{c|c|c|c}
       Method & Reference & Parameters & Values   \\
       \hline
       \multirow{3}{9em}{KDE + Von Mises} & \multirow{3}{5em}{This paper}  & $\beta$ & 3   \\
       & & $I_{min}$ & 0.11 \\
       & & $\gamma$ s.t. $h_n = \gamma n^{-\frac{1}{2\beta + 2+2}}$ & 0.35   \\
       \hline
       \multirow{2}{9em}{KNN + Von Mises} & \multirow{2}{9em}{\cite{poczos11a}} & $I_{min}$ &  $0.05$  \\
       & & number of bins $k$  & $\lfloor \sqrt{n}\rfloor$\\
       \hline
       \multirow{2}{9em}{MINE} & \multirow{2}{9em}{\cite{belghazi2018mutual}} & $I_{min}$& 0.11\\
       & & number of epochs & {\scriptsize 10 if $n \leq 100$, 50  if $n = 250$, 100 otherwise} \\
    \hline
    HSIC & \cite{zhang2012kernel} & statistical significance $\alpha$ & 0.001 \\
    \hline
    Gaussian & -- & statistical significance $\alpha$ & 0.05 \\
    \hline
    OT-based & \cite{akbari2023} & threshold $\delta$ & 1.7 \\
    \hline
    \end{tabular}
    \caption{Parameters for CI tests in numerical experiments}
    \label{tab:paras_CI}
\end{table}


\paragraph{Further comments on performance of MINE}
As shown in Figure \ref{fig:CI_test_power},  MINE (\cite{belghazi2018mutual}) performs very poorly in our experiments; the total error is close to 1. This is because the number of samples at which we work is way smaller than the number of samples required for the method to work, namely $\sim 2 \times 10^6$. 

\subsection{PC and GS algorithms}
\paragraph{The model}
For our experiments in Section \ref{sec: exp disc}, we used the following Structural Equation Model (SEM) to generate the data:
\begin{flalign*}
    X_1 &:=   U_1\\
    X_2 &:=  U_2\\
    X_3 &:=  X_1^2 + X_2 + U_3\\
    X_4 &:=  U_4\\
    X_5 &:=  0.5 \times X_1^2 - 0.5 \times X_4^2 + U_5\\
    X_6 &:= X_4^3 - X_5 + U_6 ,
\end{flalign*} where $U_i$ variables are i.i.d. power-law distributed with density $p_{\beta}(x)=(\beta + 1.15) x^{\beta+0.15}\mathbf{1}_{[0,1]}(x)$. It is clear from this SEM that the corresponding causal graph is the one displayed in Figure \ref{fig:true_causal_graph}.

\paragraph{Parameters}
Table \ref{tab:paras_algs} provides the parameters employed in numerical experiments for the PC and GS algorithms with various CI testers.

\begin{table}[h]
    \centering
    \begin{tabular}{c|c|c|c}
       Method & Reference & Parameters & Values   \\
       \hline
       \multirow{3}{9em}{KDE + Von Mises} & \multirow{3}{5em}{This paper}  & $\beta$ & 3   \\
       & & $I_{min}$ & 0.01 \\
       & & $\gamma$ s.t. $h_n = \gamma n^{-\frac{1}{2\beta + 2+2}}$ & 0.35   \\
      \hline
      \multirow{2}{9em}{KNN + Von Mises} & \multirow{2}{10em}{\cite{poczos11a}} & $I_{min}$ & $0.05$\\
      & & number of bins $k$ & $\lfloor \sqrt{n}\rfloor$ \\
      \hline
     \multirow{2}{9em}{MINE} & \multirow{2}{10em}{ \cite{belghazi2018mutual}} & $I_{min}$ & 0.01\\
      & & number of epochs &{\scriptsize 10 if $n \leq 100$, 50  if $n = 250$, 100 otherwise} \\
      \hline
    HSIC & \cite{zhang2012kernel} & statistical significance $\alpha$ & 0.001 \\
    \hline
    Gaussian & -- & statistical significance $\alpha$ & 0.05 \\
    \hline
    OT-based & \cite{akbari2023} & thresholds $\delta(d_{\bZ}=2, \ldots, 6)$ & [1.9, 1.8, 1.2, 0.4, 0.4] \\
    \hline
    \end{tabular}
    \caption{Parameters for PC and GS tests in numerical experiments}
    \label{tab:paras_algs}
\end{table}


\paragraph{Experiments for GS}

\begin{figure*}[h]
    \begin{subfigure}[t]{0.48\textwidth}
        \includegraphics[width=0.9\linewidth]{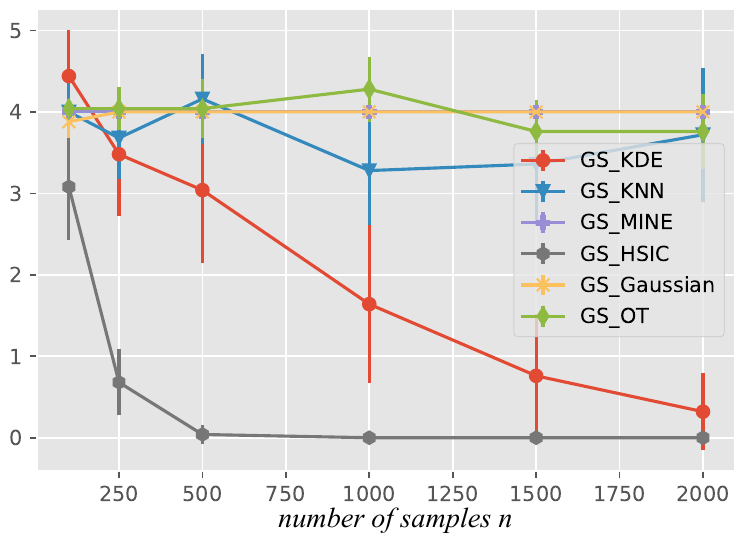}
        \caption{Overall loss of GS algorithm with various CI testers.}
        \label{fig:GS_loss}
    \end{subfigure}
    \hspace{0.01\textwidth}
    \hfill
    \begin{subfigure}[t]{0.48\textwidth}
        \includegraphics[width=0.9\linewidth]{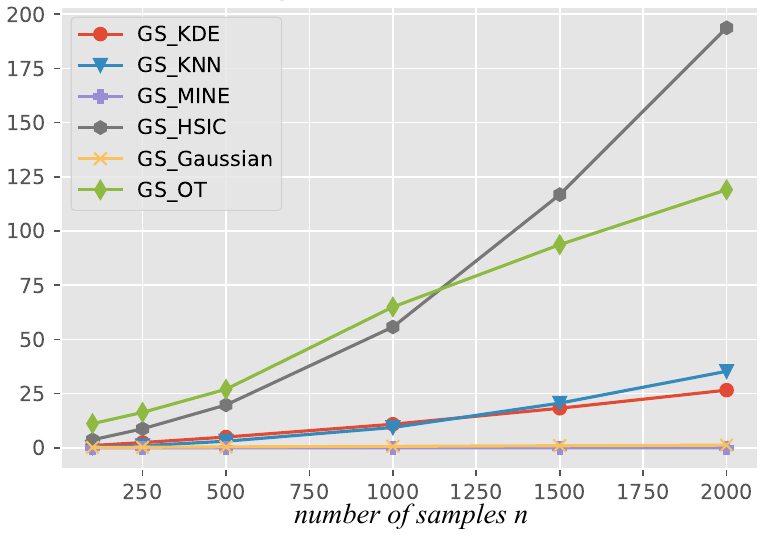}
        \caption{Runtime (s) of GS algorithm with various CI testers.}
        \label{fig:GS_time}
    \end{subfigure}
    \caption{Results of the numerical experiments for GS (on the x-axis: number of samples $n$)}
    \label{fig:results_GS}
\end{figure*}

Similar to Section \ref{sec: exp disc}, we conducted experiments to evaluate the performance of the GS algorithm when using VM-CI as a CI tester vs. other CI testers. 
To do so, we used the aforementioned SEM. Results for GS are shown in Figure \ref{fig:results_GS}.
Similar to the results observed for the PC algorithm, this figure illustrates that VM-CI surpasses the majority of approaches, with HSIC being the only exception. 
Nevertheless, analogous to PC, VM-CI competes with HSIC when the number of samples increases and offers significantly better computational efficiency than HSIC.

\end{document}